\definecolor{DarkGreen}{rgb}{0.1,0.5,0.1}
\definecolor{DarkRed}{rgb}{0.5,0.1,0.1}
\definecolor{DarkBlue}{rgb}{0.1,0.1,0.5}
\definecolor{Gray}{rgb}{0.2,0.2,0.2}
\lstdefinestyle{mystyle}{
    commentstyle=\color{DarkBlue},
    keywordstyle=\color{DarkRed},
    numberstyle=\tiny\color{Gray},
    stringstyle=\color{DarkGreen},
    basicstyle=\footnotesize,
    breakatwhitespace=false,         
    breaklines=true,                 
    captionpos=b,                    
    keepspaces=true,                 
    numbers=left,                    
    numbersep=5pt,                  
    showspaces=false,                
    showstringspaces=false,
    showtabs=false,                  
    tabsize=2
}
\def\draft{1}
\def\submit{0}
    \def\ShowAuthNotes{1}
    \def\ShowAuthNotes{0}
\newcommand{\forsubmit}[1]{#1}
\newcommand{\forreals}[1]{}
\newcommand{\forreals}[1]{#1}
\newcommand{\forsubmit}[1]{}
\newcommand{\authnote}[2]{{ \footnotesize \bf{\color{DarkRed}[#1's Note:
{\color{DarkBlue}#2}]}}}
\newcommand{\authnote}[2]{}
\newtheorem{theorem}{Theorem}[section]
\newtheorem{lemma}[theorem]{Lemma}
\newtheorem{corollary}[theorem]{Corollary}
\newtheorem{claim}[theorem]{Claim}
\theoremstyle{definition}
\newtheorem{definition}[theorem]{Definition}
\newcommand{\chapterref}[1]{\hyperref[ch:#1]{Chapter~\ref{ch:#1}}}
\newcommand{\claimlabel}[1]{\label{claim:#1}}
\newcommand{\claimref}[1]{\hyperref[claim:#1]{Claim~\ref{claim:#1}}}
\newcommand{\corollarylabel}[1]{\label{cor:#1}}
\newcommand{\corollaryref}[1]{\hyperref[cor:#1]{Corollary~\ref{cor:#1}}}
\newcommand{\definitionlabel}[1]{\label{def:#1}}
\newcommand{\definitionref}[1]{\hyperref[def:#1]{Definition~\ref{def:#1}}}
\newcommand{\equationlabel}[1]{\label{eq:#1}}
\newcommand{\equationref}[1]{\hyperref[eq:#1]{Equation~\ref{eq:#1}}}
\newcommand{\factref}[1]{\hyperref[fact:#1]{Fact~\ref{fact:#1}}}
\newcommand{\figurelabel}[1]{\label{fig:#1}}
\newcommand{\figureref}[1]{\hyperref[fig:#1]{Figure~\ref{fig:#1}}}
\newcommand{\tableref}[1]{\hyperref[tab:#1]{Table~\ref{tab:#1}}}
\newcommand{\itemref}[1]{\hyperref[item:#1]{Item~(\ref{item:#1})}}
\newcommand{\lemmalabel}[1]{\label{lem:#1}}
\newcommand{\lemmaref}[1]{\hyperref[lem:#1]{Lemma~\ref{lem:#1}}}
\newcommand{\propref}[1]{\hyperref[prop:#1]{Proposition~\ref{prop:#1}}}
\newcommand{\propositionref}[1]{\hyperref[prop:#1]{Proposition~\ref{prop:#1}}}
\newcommand{\remarkref}[1]{\hyperref[rem:#1]{Remark~\ref{rem:#1}}}
\newcommand{\sectionlabel}[1]{\label{sec:#1}}
\newcommand{\sectionref}[1]{\hyperref[sec:#1]{Section~\ref{sec:#1}}}
\newcommand{\theoremlabel}[1]{\label{thm:#1}}
\newcommand{\theoremref}[1]{\hyperref[thm:#1]{Theorem~\ref{thm:#1}}}
\newcommand{\Esymb}{\mathbb{E}}
\newcommand{\Psymb}{\mathbb{P}}
\DeclareMathOperator*{\E}{\Esymb}
\DeclareMathOperator*{\ProbOp}{\Psymb r}
\renewcommand{\Pr}{\ProbOp}
\newcommand{\mper}{\,.}
\renewcommand{\hat}{\widehat}
\newcommand{\cA}{{\cal A}}
\newcommand{\cD}{{\cal D}}
\newcommand{\defeq}{\stackrel{\small \mathrm{def}}{=}}
\renewcommand{\le}{\leqslant}
\renewcommand{\ge}{\geqslant}
\newcommand{\Set}[1]{\left\{#1\right\}}
\newcommand{\ignore}[1]{}
\renewcommand{\epsilon}{\varepsilon}
\newcommand{\Laplace}{\mathrm{Lap}}
\newcommand{\Lap}{\Laplace}
\newcommand{\remove}[1]{}
\newenvironment{itm}
{\begin{itemize}[noitemsep,topsep=0pt,parsep=0pt,partopsep=0pt]}
{\end{itemize}}
\newenvironment{enum}
{\begin{enumerate}[noitemsep,topsep=0pt,parsep=0pt,partopsep=0pt]}
{\end{enumerate}}
\newcommand{\lberr}{\mathrm{lberr}}
\title{Climbing a shaky ladder:\\ Better adaptive risk estimation}
\author{Moritz Hardt}
\begin{document}
\maketitle
\begin{abstract}
We revisit the \emph{leaderboard problem} introduced by Blum and Hardt (2015)
in an effort to reduce overfitting in machine learning benchmarks. We show
that a randomized version of their Ladder algorithm achieves leaderboard error
$O(1/n^{0.4})$ compared with the previous best rate of $O(1/n^{1/3}).$

Short of proving that our algorithm is optimal, we point out  a major obstacle
toward further progress. Specifically, any improvement to our upper bound
would lead to asymptotic improvements in the general adaptive estimation
setting as have remained elusive in recent years. This connection also
directly leads to lower bounds for specific classes of algorithms. In
particular, we exhibit a new attack on the leaderboard algorithm that both
theoretically and empirically distinguishes between our algorithm and previous
leaderboard algorithms.  
\end{abstract}

\section{Introduction}

Machine learning benchmarks across industry and science are largely based on
the simple mechanism of a holdout set. Participants repeatedly evaluate their
models on the holdout set and use the feedback to improve their models. This
feedback loop has become the de facto experimental paradigm in machine
learning. What is concerning is that the analyst uses the holdout in a
sequential and adaptive manner, thus creating dependencies between the model
to be evaluated and the holdout data. The lack of independence between model
and holdout data is what invalidates classical confidence bounds for the
holdout setting. This insight was articulated in sequence of papers on what is
now called \emph{adaptive data
analysis}~\cite{DFHPRR15stoc,HardtU14,DFHPRR15science}. In a general
formulation, adaptive data analysis can be thought of as an interaction
between an algorithm that holds the sample, and an analyst that repeatedly
asks queries about the data, such as ``What is the loss of this model on the
underlying population?'' 

In its general formulation, adaptive data analysis runs into strong
computational lower bounds. Under computational hardness assumptions, no
computationally efficient algorithm working with~$n$ samples can preserve even
mild statistical validity on more than $n^2$ queries
~\cite{HardtU14,SteinkeU14}.  This stands in sharp contrast to the
non-adaptive setting where the error bounds deteriorate logarithmically with
the number of queries~$k.$

Circumventing these lower bounds, Blum and Hardt~\cite{BH15} introduced a
simpler setting that allowed for much better guarantees. The key idea is that
oftentimes it's sufficient to find the best model out of a sequence of
adaptively chosen models, or to keep a ranking of some of the models. This is
the relevant task in machine learning benchmarks, competitions, and
hyperparameter tuning. Even adaptive early stopping can be posed as an
instance of this problem. Within this framework, there's a particularly simple
and efficient algorithm called the \emph{Ladder} algorithm. The algorithm
maintains an internal threshold. Whenever a given model exceeds the previous
quality threshold by a significant amount, the algorithm updates the threshold
and provides the analyst with feedback about the quality of the model. If the
model did not exceed the threshold, the analyst receives no feedback at all.

The Ladder algorithm maintains the \emph{risk} of the best model (with respect
to a bounded loss function) on a sequence of $k$ adaptively chosen models up
to an additive error of $O(\log(kn)^{1/3}/n^{1/3}).$ This type of guarantee is
called \emph{leaderboard error}, since it does not require an accurate
estimate for all models, but only the best performing one at any point in
time. While this bound features a logarithmic dependence on $k,$ the rate in
terms of~$n$ falls short of the non-adaptive bound~$O(\sqrt{\log(k)/n}).$

\subsection{Our contributions}

We narrow the gap between existing upper and lower bounds. Our first result is
a randomized variant of the Ladder algorithm, called \emph{Shaky Ladder} that
achieves leaderboard error $O(1/n^{0.4}).$

\begin{theorem}[Informal version of \theoremref{ub}]
On $n$ samples and $k$ adaptively chosen models, the Shaky Ladder achieves
with high probability leaderboard error 
\[
O \left(\frac{\log(k)^{2/5}\log(kn)^{1/5}}{n^{2/5}}\right).
\]
\end{theorem}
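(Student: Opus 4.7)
The plan is to analyze a randomized variant of the Ladder that adds fresh Laplace noise to each comparison, and to bound the final error as a sum of quantization, noise, and generalization contributions. Concretely, I would study a mechanism that, at each round $t$, computes the submitted model's empirical risk $\hat R(f_t)$, adds a fresh $\Lap(\sigma)$ sample, and updates its threshold (releasing the new value to the analyst) only if the noisy empirical risk beats the current threshold by at least $\tau$. The two free parameters $\tau$ and $\sigma$ are tuned at the end.

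The argument proceeds in three steps. First, the per-release error: the value released differs from the empirical risk of the best-so-far model by $O(\tau)$ due to the threshold gap and $O(\sigma \log k)$ due to the maximum of $k$ Laplace samples, by a standard concentration bound. Second, a differential-privacy bound on the transcript: each noisy comparison has sample-sensitivity $1/n$ and $\Lap(\sigma)$ noise, hence is $(1/(n\sigma))$-differentially private; since each genuine update requires an improvement of $\tau$, the number of informative releases is at most $T \le 1/\tau$, so by advanced composition the entire transcript is $(\epsilon,\delta)$-DP with $\epsilon = O(\sqrt{T\log(1/\delta)}/(n\sigma))$. Third, the transfer theorem from DP to adaptive data analysis: since every model $f_t$ the analyst submits is a function of the transcript, its empirical and population risks differ by at most $O(\epsilon)$ with probability $1 - O(\delta/\epsilon)$.

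Combining these yields a leaderboard error bound of the form
\[
O\Paren{\tau + \sigma \log k + \frac{\sqrt{\log(1/\delta)/\tau}}{n\sigma}}.
\]
Minimizing over $\sigma$ first gives $\sigma \asymp (\log(1/\delta)/\tau)^{1/4}/\sqrt{n \log k}$, and subsequently minimizing over $\tau$ gives $\tau \asymp (\log(1/\delta))^{1/5} \log(k)^{2/5}/n^{2/5}$, which with the standard choice $\delta = 1/\poly(kn)$ reproduces the bound claimed in the theorem.

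The hard part is controlling the number of informative releases $T$ in the presence of the added noise. A naive fear is that randomness causes the algorithm to update up to $k$ times rather than $1/\tau$ times, which would compose much worse. I would therefore have to show that, at the balanced parameter choice where $\sigma$ and $\tau$ are comparable, the spurious-update probability is low enough to preserve $T = O(1/\tau)$ inside the privacy accounting. A secondary technical concern is invoking the DP-to-generalization transfer in the fully adaptive setting — each $f_t$ depends on the history of releases, so one needs a version of the transfer theorem that handles post-processing by the adaptive analyst, which the cited theorems do provide.
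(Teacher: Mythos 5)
Your proposal takes essentially the same approach as the paper: Laplace noise on the empirical-risk comparisons with a release threshold, privacy accounting that pays only for the update rounds, the Bassily et al.\ transfer theorem to convert $(\epsilon,\delta)$-DP into a uniform generalization bound on the adaptively chosen $f_t$, and the same three-way optimization over the threshold, the noise magnitude, and the generalization error, yielding the $n^{-2/5}$ rate.

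One step in your sketch is stated too casually and would need to be made precise. You assert that because each noisy comparison is $(1/(n\sigma))$-DP and there are only $T$ informative releases, advanced composition over $T$ terms gives $\epsilon = O(\sqrt{T\log(1/\delta)}/(n\sigma))$. Advanced composition by itself does not let you ignore the $k-T$ below-threshold comparisons; the statement that those cost essentially no privacy is exactly the sparse vector / AboveThreshold argument, and the paper invokes it by reducing the algorithm to NumericSparse of Dwork--Roth (with the moving threshold being immaterial to the privacy proof). You do flag the right secondary worry, namely that noise could inflate $T$ past $O(1/\tau)$. The paper handles it by choosing $\tau = \Theta(\sigma\log(k/\beta))$, so that with probability $1-\beta$ the maximum of all Laplace draws is at most $\tau/4$, and conditional on that event each update strictly decreases the running estimate by at least $\tau/4$, giving $T \le 4/\tau$ deterministically on that event. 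Note that at the optimum $\tau$ is a $\log k$ factor larger than $\sigma$, not comparable to it; this is the standard sparse-vector margin-to-noise ratio.
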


The algorithm is based on analyzing noise addition via differential privacy,
in particular, the so-called \emph{sparse vector technique} as described
in~\cite{DworkR14}. We combine this analysis with powerful adaptive
generalization bounds for differential privacy, where it is important to use
the recently improved bound of Bassily et al.~\cite{BassilyNSSSU16}. The
earlier bound due to Dwork et al.~\cite{DFHPRR15stoc} would not suffice to
give any improvement over the Ladder algorithm that achieved leaderboard
error~$O(\log(kn)^{1/3}/n^{1/3}).$ 

Our upper bound falls short of the information-theoretic lower bound of
$\Omega(\sqrt{\log(k)/n})$ that holds even in the non-adaptive estimation
setting. Intuition from online learning and the literature on bandit
algorithms suggest that either the exponent $1/3$ or the exponent $1/2$ could
be a natural answer. Surprisingly, our result shows that a natural algorithm
achieves the unusual rate of $1/n^{0.4}.$ Moreover, we show that going beyond
this rate will likely require powerful new techniques.

In order to make this point, we develop a new connection between leaderboard
and the general adaptive estimation setting. Specifically, we show that any
accurate leaderboard algorithm for sufficiently many queries readily implies a
general adaptive estimator (formally introduced in \sectionref{reduction}) 
for a smaller number of queries.

\begin{theorem}[Informal version of \theoremref{reduction}]
Suppose there exists a leaderboard algorithm $\cal A$ that is
$(\alpha/2)$-accurate on $n$ samples and $1/\alpha^2$ models. Then, there
exists a general adaptive estimator~$\cal B$ that is $\alpha$-accurate on
$k=1/3\alpha$ queries.
\end{theorem}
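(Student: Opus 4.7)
The plan is to build the general adaptive estimator $\cB$ as a wrapper around $\cA$: $\cB$ simulates $\cA$ on the same sample of size $n$, and upon receiving each adaptive query $\phi_i\colon\cX\to[0,1]$, it submits a short batch of probe models to $\cA$ and extracts $\mu_i=\E[\phi_i]$ from the resulting threshold reports. The counting is tight: with $k=1/(3\alpha)$ queries for $\cB$ and $1/\alpha^2$ model submissions available to $\cA$, each adaptive query gets a batch of $m=3/\alpha$ probes.

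Concretely, for each query $\phi_i$ I would submit probes of the form $\ell_{i,j}(x)=\gamma\,\phi_i(x)+c_{i,j}$ for $j=1,\dots,m$, where $\gamma\in(0,1)$ is a fixed scaling and $\{c_{i,j}\}$ is a precomputed, strictly decreasing sequence of constant offsets. I would choose $\gamma$ and the step between consecutive $c_{i,j}$ so that each $\ell_{i,j}$ lies in $[0,1]$ and so that, regardless of the unknown $\mu_i\in[0,1]$, the true risk $R(\ell_{i,j})=\gamma\mu_i+c_{i,j}$ is a new leaderboard minimum by a margin exceeding $\cA$'s $(\alpha/2)$-noise. Under this arrangement $\cA$ is forced to report $\hat{\tau}_{i,j}=\gamma\mu_i+c_{i,j}\pm\alpha/2$ at every probe, so $\cB$ can recover $\mu_i$ either by reading off a single threshold (with error $\alpha/(2\gamma)$) or, when $\gamma$ must be taken small, by locating the index $j^\star$ at which $\cA$'s reports transition between regimes. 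Invoking the $(\alpha/2)$-accuracy of $\cA$ across all $km=1/\alpha^2$ submissions then yields $\alpha$-accuracy for $\cB$.

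The hard part will be the simultaneous calibration of $\gamma$, the per-probe step size, and the batch size $m$. Three constraints compete: the leaderboard's noise tolerance pushes $\gamma$ up (since decoding a single threshold gives $\mu_i$ to error $\alpha/(2\gamma)$); the monotonicity of $\cA$'s threshold caps the per-query descent at $1/k=3\alpha$, since the cumulative descent across all queries is bounded by the unit range of risks, which pushes $\gamma$ down; and loss boundedness in $[0,1]$ constrains where the offsets $c_{i,j}$ may live. My plan to thread this needle is to use the batch structure---encoding $\mu_i$ in the pattern of threshold drops rather than in any single raw value---so that a small $\gamma$ together with a small per-probe step still deliver full $\alpha$-resolution. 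The fact that these three constraints just barely fit together is what forces the exact form $k=1/(3\alpha)$ against a leaderboard of capacity $1/\alpha^2$ in the statement, and is the reason a non-trivial improvement of this reduction would immediately translate back into a stronger leaderboard bound.
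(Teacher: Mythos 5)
Your skeleton is the right one—wrap $\cB$ around $\cA$, turn each query $\phi_i$ into a batch of affine probes $\gamma\phi_i + c_{i,j}$ with descending offsets, decode $\mu_i$ from $\cA$'s threshold outputs, and budget the total threshold descent against the $[0,1]$ range of losses. All of that matches the paper. But the central mechanism as you've specified it is self-defeating, and the ``plan to thread the needle'' heads in the wrong direction.

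You require that \emph{every} probe $\ell_{i,j}$ be a new leaderboard minimum by a margin exceeding the $\alpha/2$ slack, so that $\cA$ is forced to report $\gamma\mu_i + c_{i,j} \pm \alpha/2$ at every probe. For a probe to beat the running minimum by margin more than $\alpha/2$, consecutive offsets must drop by more than $\alpha/2$, so a batch of $m = 3/\alpha$ probes forces a descent of more than $3/2$ in a single batch—already more than the entire $[0,1]$ budget, before you've asked a second query. You notice the tension (you write that the descent budget ``pushes $\gamma$ down''), but your proposed resolution—take $\gamma$ small and decode $\mu_i$ from the \emph{pattern} of drops—doesn't escape it: if the per-probe step is below $\alpha/2$, leaderboard accuracy no longer forces $\cA$ to update at all, so there is no pattern to decode. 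The sentence ``$\cA$ is forced to report ... at every probe'' and the sentence ``locate the index $j^\star$ at which the reports transition between regimes'' describe mutually exclusive designs.

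The paper's construction resolves this by doing the \emph{opposite} of what you require of the probes. Fix $\gamma = 1/2$ (not small), set $c_{i,j}$ to descend in steps of $\alpha/2$ starting from the current threshold $c$, so $R(f_{t,i}) = c - i\alpha/2 + \tfrac12\E g_t$. For small $i$ the probe's risk sits \emph{above} the running minimum, so $\cA$ emits no new information and the threshold does not move; you keep descending until you hit the first index $i^\star$ at which the probe's risk dips below $c - \alpha/2$. Only that one probe forces $\cA$ to output a value within $\alpha/2$ of $R(f_{t,i^\star})$, and at that moment you read off $a_t = 2(r_{t,i^\star} - c + i^\star\alpha/2)$, which is within $\alpha$ of $\E g_t$ precisely because $\gamma = 1/2$. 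Crucially, the descent cost per query is not $m$ steps but essentially one: the first probe to undercut lands within $O(\alpha)$ of the previous threshold (consecutive probe risks differ by $\alpha/2$ and the leaderboard error adds $\alpha/2$ more), so each query lowers $c$ by at most $3\alpha/2$, giving $k = 1/(3\alpha)$ queries inside the total budget of $1/\alpha^2$ probes. You were reaching for this when you mentioned locating the regime transition, but you framed it as a fallback for small $\gamma$ rather than as the primary mechanism with $\gamma = 1/2$, and you inverted the role of the probes: the construction works because almost all probes trigger \emph{no} feedback, not because all of them do.
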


In the regime where $k\le n$, the best current upper bound is $\alpha=\tilde
O(k^{1/4}/\sqrt{n}).$ For $k=n^{0.4},$ this bound simplifies to $\tilde
O(1/n^{0.4})$ and thus coincides with what would follow from our theorem. This
is no coincidence since the bounds are proved using the same techniques. What
is new, however, is that any further improvement in leaderboard accuracy over
our result would directly improve on the best known bounds in the general
adaptive estimation setting. In particular, a leaderboard upper bound of
$O(\sqrt{\log(k)/n}),$ as is currently not ruled out, would lead to a general
adaptive estimator for nearly $\sqrt{n}$ queries and accuracy $\tilde
O(1/\sqrt{n}).$ Going to the natural statistical rate of $O(1/\sqrt{n})$ has
remained elusive in the general adaptive estimation setting for any $k\ge n^c$
with $c>0$. What our result shows is that this task is no easier in the
leaderboard setting. It's worth noting that there are lower bounds in special
cases, e.g.,~\cite{RussoZ15,WangLF16}.

We use \theoremref{reduction} to prove a lower bound against a natural class
of leaderboard algorithms that we call \emph{faithful}. Intuitively, speaking
when faithful algorithms return feedback, the feedback is close to the
\emph{empirical risk} of the submitted model with high probability. This class
of algorithms includes both the Ladder algorithm and it's heuristic
counterpart the \emph{parameter-free Ladder}. While those algorithms are
deterministic, faithful algorithms may also be randomized.

\begin{theorem}[Informal version of \corollaryref{faithful}]
No faithful algorithm can achieve leaderboard error~$o(n^{-1/3}).$
\end{theorem}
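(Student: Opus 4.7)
The plan is to compose the reduction of \theoremref{reduction} with a classical Dinur--Nissim--Freedman style attack against estimators whose responses are tethered to the empirical risk. Faithfulness is precisely the hypothesis that makes such an attack succeed, and a one-line algebraic balance of the attack's $\Omega(\sqrt{k/n})$ error against the reduction's $k = 1/(3\alpha)$ scaling pins the accuracy at $\alpha = \Omega(n^{-1/3})$.

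Concretely, suppose towards contradiction that $\cA$ is a faithful leaderboard algorithm on $n$ samples that achieves error $\alpha/2 = o(n^{-1/3})$ on $1/\alpha^2$ adaptively chosen models. Invoking \theoremref{reduction} produces a general adaptive estimator $\cB$ that is $\alpha$-accurate on $k = 1/(3\alpha) = \omega(n^{1/3})$ queries. The first substantive step is to inspect the reduction and verify that it preserves faithfulness: the reduction forwards each general query to $\cA$ as a model engineered to pass the current threshold, and passes the resulting feedback back to the analyst verbatim. Consequently every answer produced by $\cB$ lies within $o(1/\sqrt n)$ of the empirical risk of the corresponding query on the hidden sample.

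Next I would run the standard attack against any empirical-risk-tracking estimator. The analyst issues $k$ independent $\{0,1\}$-valued queries $f_1,\dots,f_k$ with known population mean $1/2$, reads the responses $\hat r_1,\dots,\hat r_k$ from $\cB$, and forms the aggregated query $f^\star = \frac{1}{k}\sum_{i=1}^k \sign(\hat r_i - \tfrac12)\,(2 f_i - 1)$. A standard concentration computation shows that with constant probability the empirical mean of $f^\star$ on the hidden sample exceeds its population mean by $\Omega(\sqrt{k/n})$. Since $\cB$ is faithful, it must report a value within $o(1/\sqrt n)$ of this inflated empirical mean, yielding a population-level error of $\Omega(\sqrt{k/n})$. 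Matching this against the nominal accuracy gives $\alpha \ge \Omega(\sqrt{k/n}) = \Omega(1/\sqrt{\alpha n})$, hence $\alpha^3 = \Omega(1/n)$ and $\alpha = \Omega(n^{-1/3})$, contradicting $\alpha = o(n^{-1/3})$.

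The main obstacle I anticipate is the first substantive step: formally certifying that the reduction in \theoremref{reduction} transmits faithfulness without smoothing, aggregating, or otherwise post-processing $\cA$'s outputs in a way that would blunt the attack. If the reduction instead reports a running aggregate or some privatized surrogate in place of the raw feedback, the empirical signal the attack depends on could be attenuated, in which case one must either modify the reduction so that its answers are manifestly close to empirical risks, or refine the attack to cope with the post-processed signal. The remaining ingredients --- the Chernoff bound underlying the attack and a union bound combining the high-probability faithfulness event with the attack's success event --- are routine.
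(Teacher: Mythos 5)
Your overall skeleton matches the paper's: compose \theoremref{reduction} with a lower bound against faithful general adaptive estimators (via \lemmaref{faithful}), then solve $\alpha = \Omega(\sqrt{k/n})$ against $k = 1/(3\alpha)$ to get $\alpha = \Omega(n^{-1/3})$. The balancing algebra at the end is exactly right. The gap is in the attack.

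The aggregated query you propose, $f^\star = \frac{1}{k}\sum_{i=1}^k \sign(\hat r_i - \tfrac12)(2 f_i - 1)$, does \emph{not} achieve empirical bias $\Omega(\sqrt{k/n})$; it achieves only $\Theta(1/\sqrt n)$, which is useless here. To see this, write the empirical mean of $g_i$ as $1/2 + \delta_i$ with $\delta_i$ of typical size $\Theta(1/\sqrt n)$, and let $s_i = \sign(\hat r_i - \tfrac12)$. Then the empirical mean of $f^\star$ is $\frac{2}{k}\sum_i s_i\delta_i \approx \frac{2}{k}\sum_i |\delta_i| = \Theta(\mathbb{E}|\delta_i|) = \Theta(1/\sqrt n)$, while its population mean is $0$. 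The $\frac1k$ prefactor cancels the $k$-fold sum exactly; dropping it or using a $1/\sqrt k$ normalization does not help because the resulting function must still be rescaled into $[0,1]$, which restores the cancellation. Any \emph{linear} reweighting of the $f_i$'s into a bounded query inherits this ceiling. Plugging a $\Theta(1/\sqrt n)$ gap into your balancing yields only $\alpha \geq \Omega(n^{-1/2})$, which is weaker than, and does not imply, the required $\alpha \geq \Omega(n^{-1/3})$.

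The paper's attack avoids this by taking a \emph{nonlinear} aggregate: form $I = \{i : \hat r_i < \tfrac12 - 1/\sqrt n\}$, and set $f = \mathrm{maj}_{i\in I} f_i$, the pointwise majority. Faithfulness plus the threshold margin guarantee each selected $f_i$ has empirical advantage at least $\Omega(1/\sqrt n)$, and the anti-concentration bound for the binomial (\claimref{anti}) then shows the majority vote amplifies the advantage by $\sqrt{|I|} = \Theta(\sqrt{k})$, yielding empirical risk $\tfrac12 - \Omega(\sqrt{k/n})$ against population risk exactly $\tfrac12$ (\claimref{majority-risk}). That $\sqrt{k}$ amplification is the essential ingredient your linear $f^\star$ lacks. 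Swap your aggregated query for the majority vote (and import the anti-concentration lemma), and the rest of your argument — the reduction, faithfulness preservation, and the $\alpha^3 = \Omega(1/n)$ balance — goes through.

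Two smaller points: (i) your $f^\star$ ranges in $[-1,1]$, not $[0,1]$, so it is not a valid query as stated even before the bias issue; (ii) when forwarding feedback, the reduction does not pass $\cA$'s answer verbatim but applies an affine map $a_t = 2(r_{t,i} - c + i\alpha/2)$, which doubles the faithfulness tolerance from $\tfrac{1}{2\sqrt n}$ to $\tfrac{1}{\sqrt n}$; this is harmless since the constants in the definition are arbitrary, but is worth stating when proving \lemmaref{faithful}.
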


In particular, this theorem separates our algorithm from earlier work. In
\sectionref{attack}, we illustrate this separation with a practical attack
that causes a major bias in the Ladder algorithm, while being ineffective
against our algorithm. 

Beyond the related work already discussed, Neto et al.~\cite{Neto16} proposed
a number of heuristic leaderboard algorithms based on the idea of replacing
the holdout estimate by Bootstrap estimates. In practice, this results in
noise addition that can be helpful. However, these algorithms do not come with
theoretical bound on the leaderboard error better than the Ladder.

\subsection{Preliminaries}
Let $X$ be a data domain and $Y$ be a finite set of class labels, e.g.,
$X=\mathbb{R}^d$ and $Y=\{0,1\}.$ A \emph{loss function} is
a mapping~$\ell\colon Y\times Y\to[0,1]$ and a model is a
mapping $f\colon X \to Y.$
A standard loss function is the $0/1$-loss defined as $\ell_{01}(y,y')=1$ if
$y\ne y'$ and $0$ otherwise. Throughout this paper we assume that $\ell$ is a
loss function with bounded range.
We assume that we are given a sample $S = \{(x_1,y_1),\dots,(x_n,y_n)\}$ drawn
i.i.d. from an unknown distribution $\cD$ over $X\times Y.$ The \emph{risk} of a
model~$f$ is defined as its expected loss on the unknown distribution
$R_\cD(f) \defeq \E_{(x,y)\sim\cD}\left[\ell(f(x),y))\right]\mper$
The \emph{empirical risk} is the standard way of estimating risk from a sample.
$R_S(f) \defeq \frac1n \sum_{i=1}^n \ell(f(x_i),y_i)\mper$

\paragraph{Adaptive risk estimation.}
Given a sequence of models $f_1,\dots,f_k$ and a finite sample~$S$ of size
$n,$ a fundamental estimation problem is to compute estimates $R_1,\dots,R_k$
of the risk of each model. Classically, this is done via the empirical
risk. Applying Hoeffding's bound to each empirical risk estimate, and taking a
union bound over all functions, reveals that the largest deviation of any such
estimate is bounded by $O(\sqrt{\log(k)/n}).$ This is the estimation error we
expect to see in the standard \emph{non-adaptive} setting.

In the \emph{adaptive} estimation setting, we assume that the model $f_t$
may be chosen by an analyst as a function of previously observed estimates and previously chosen models. Formally, there exists a mapping $\cA$ such that for all $t\in[k],$ the mapping $\cA$ returns a function $f_t = \cA(f_1,R_1,\dots,f_{t-1},R_{t-1})$ from all previously observed information.  We will assume for simplicity that the analyst $\cA$ is a deterministic algorithm. The tuple $(f_1,R_1,\dots,f_{t-1},R_{t-1})$ is nevertheless a random variable due to the random sample used to compute the estimates, as well possibly additional randomness introduced in the estimates.
A natural notion of estimation error in the adaptive setting is the maximum
error of any of the estimates, i.e., $\max_{1\le t\le k}\left|R_\cD(f_i) -
R_t\right|.$ Unfortunately, lower bounds~\cite{HardtU14,SteinkeU14} show that
no computationally efficient estimator can achieve maximum error $o(1)$ on
more than $n^{2+o(1)}$ adaptively chosen functions (under a standard hardness
assumption). 

\paragraph{Leaderboard error.} 
Blum and Hardt~\cite{BH15} introduced a weaker notion of estimation error
called \emph{leaderboard error}. Informally speaking, leaderboard error asks
us to maintain a good estimate of the best (lowest risk) model seen so
far, but does not require an accurate estimate for all models that we
encounter.
\begin{definition}[Leaderboard error]
Given an adaptively chosen sequence of models~$f_1,\dots,f_k,$
we define the \emph{leaderboard error} of estimates $R_1,\dots,R_k$
as
\begin{equation}\equationlabel{lberr}
\textstyle
\lberr(R_1,\dots,R_k)
\defeq\max_{1\le t\le k}\left|\min_{1\le i\le t} R_\cD(f_i) - R_t\right|
\end{equation}
\end{definition}

\section{The Shaky Ladder algorithm}

We introduce an algorithm called Shaky Ladder that achieves small leaderboard
accuracy. The algorithm is very simple. For each given function, it compares
the empirical risk of the function to the previously smallest empirical risk
plus some noise variables. If the estimate is below the previous best by some
margin, it releases the estimate plus noise and updates the best estimate.
Importantly, if the estimate is not smaller by a margin, the algorithm
releases the previous best risk (rather than the new estimate). A formal
description follows in \figureref{ladder}. For simplicity we assume we know an
upper bound~$k$ on the total number of rounds.

\begin{figure}[h]
\setlength{\fboxsep}{2mm}
\begin{center}
\begin{boxedminipage}{\textwidth}

\noindent {\bf Input:} Data sets $S$ with $n=|S|,$
step size~$\lambda>0,$ parameters $\epsilon\in(0,1/3),\delta\in(0,\epsilon/4).$
Let $\sigma=\sqrt{\log(1/\delta)}/(\epsilon n).$

\noindent {\bf Algorithm:}

\begin{itm}
\item Assign initial estimate $R_0\leftarrow 1.$
\item Sample noise $\xi\leftarrow\Lap(\sigma).$
\item {\bf For each} round $t \leftarrow 1,2 \ldots k:$
\begin{enum}
\item Receive function $f_t\colon X\to Y$
\item Sample noise variables $\xi_t, \xi_t', \xi_t''\sim\Lap(\sigma)$
independently.
\item {\bf If} $R_S(f_t) + \xi_t < R_{t-1} - \lambda + \xi$
\begin{enum}
\item $R_t\leftarrow R_{S}(f_t) + \xi_t'$
\item $\xi\leftarrow \xi_t''.$
\end{enum}
\item {\bf Else} assign $R_t \leftarrow R_{t-1}.$
\item {\bf Output} $R_t$
\end{enum}
\end{itm}
\end{boxedminipage}
\end{center}
\vspace{-3mm}
\caption{
\figurelabel{ladder} The Shaky Ladder algorithm.
}
\end{figure}

\paragraph{Parameter settings.}
We introduce a new parameter $\beta>0$ for the failure probability of our
algorithm. For the purpose of our analysis we fix the parameters as follows:
\begin{equation}
\equationlabel{parameters}
\delta = \frac{\beta}{kn}
\qquad
\epsilon = \left(\frac{\log(k/\beta)\sqrt{\log(1/\delta)}}{n}\right)^{3/5}
\qquad
\lambda = 4\log(4k/\beta)\sigma
\end{equation}
With these settings all parameters are frozen with the one expection of~$\beta.$
The settings are optimized to prove the theorem, and do not necessarily
reflect a good choice for practical settings. We will revisit this question in
a later section.

From here on we let $B$ denote the number of update rounds of the algorithm:
\begin{equation}
B = \left|\Set{t > 1 \colon R_t < R_{t-1} }\right|\,.
\end{equation}
We can quantify the privacy guarantee of the algorithm in terms of this
parameter.
\begin{lemma}
Algorithm~\ref{fig:ladder} is $(\epsilon\sqrt{B}, O(\delta))$-differentially private.
\end{lemma}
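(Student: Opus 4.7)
The plan is to view an execution of the algorithm as a sequence of $B$ independent runs of the \emph{sparse vector technique} (SVT), each terminated by a single ``above threshold'' release, and then bound the total privacy cost by advanced composition. I would decompose time into \emph{phases}, where phase $j$ begins right after the $(j-1)$-st update (using the freshly sampled $\xi$ produced by that update, or the initial $\xi$ for $j=1$) and ends at the first step $t$ within the phase for which $R_S(f_t)+\xi_t<R_{t-1}-\lambda+\xi$. Inside a phase the comparison threshold $R_{t-1}-\lambda+\xi$ is a constant: $R_{t-1}$ only changes on updates, $\lambda$ is a fixed parameter, and $\xi$ is fixed until the phase ends. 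So within a phase the algorithm is literally the classical SVT with one noisy threshold $\xi\sim\Lap(\sigma)$ and fresh per-query noise $\xi_t\sim\Lap(\sigma)$, augmented by one extra noisy release $R_S(f_t)+\xi'_t$ at termination.

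Since the empirical risk has global sensitivity $\Delta=1/n$ and the noise scale is $\sigma=\sqrt{\log(1/\delta)}/(\epsilon n)$, the standard analysis of SVT (see Dwork--Roth, \cite{DworkR14}) gives per-phase privacy $\epsilon_0=O(\Delta/\sigma)=O(\epsilon/\sqrt{\log(1/\delta)})$ for the release of the stopping index. The auxiliary release $R_S(f_t)+\xi'_t$ is an independent Laplace mechanism on a sensitivity-$\Delta$ query at the same noise scale, contributing another $O(\epsilon/\sqrt{\log(1/\delta)})$, as does the fresh threshold $\xi''_t$ (which is pure randomness and thus free as a release). So each phase is $(\epsilon_0,0)$-differentially private with $\epsilon_0=O(\epsilon/\sqrt{\log(1/\delta)})$. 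Refreshing $\xi$ at the end of every phase guarantees independence between phases, while the phase-start threshold $R_{t-1}-\lambda$ is only a post-processing of outputs already released in prior phases, so phases compose adaptively without incurring additional cost.

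The final step applies the advanced composition theorem to $B$ phases of $\epsilon_0$-DP mechanisms, giving $(\epsilon_0\sqrt{B\log(1/\delta)}+B\epsilon_0^2,\,O(\delta))$-DP. Plugging in $\epsilon_0=O(\epsilon/\sqrt{\log(1/\delta)})$, the leading term collapses to $O(\epsilon\sqrt{B})$ and (using $\epsilon<1/3$ from the parameter settings) dominates the quadratic term, yielding the claimed $(\epsilon\sqrt{B},O(\delta))$-DP guarantee. The main subtlety is that $B$ is itself a data-dependent random variable, so one cannot invoke the strong composition theorem in its textbook form, which assumes a fixed number of rounds. I would circumvent this either by appealing to the \emph{privacy odometer / fully adaptive composition} framework, which provides exactly the $\sqrt{B\log(1/\delta)}$ scaling for an output-dependent stopping count $B$, or equivalently by first noting that the privacy guarantee holds pointwise in $B$ and then observing that $B$ itself is recoverable from the output sequence, so the DP distance between neighboring datasets can be bounded on the event that $B$ takes any given value.
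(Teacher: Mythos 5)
Your proof is correct and takes essentially the same route as the paper's, which in a single sentence appeals to the privacy analysis of the \emph{NumericSparse} algorithm in Dwork--Roth; you have simply unrolled that analysis into SVT phases plus advanced composition, observing---as the paper does---that the varying threshold is post-processing of previously released outputs and hence irrelevant to privacy. Your explicit caveat that $B$ is data-dependent is a subtlety the paper's terse proof elides (NumericSparse assumes a hard cap on the number of above-threshold answers, which the Shaky Ladder does not enforce), but it is resolved in the same spirit you suggest, via the high-probability bound $B\le 4/\lambda$ proved in the lemma that follows.
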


\begin{proof}
For the purpose of its privacy analysis, the algorithm is equivalent to
the algorithm ``NumericSparse'' in~\cite{DworkR14} whose guarantees
follow from the sparse vector technique. The only difference in our algorithm is
that the threshold at each step varies. This difference is irrelevant for the
privacy analysis, since only the parameter~$B$ matters.

Since $\epsilon$ and $\delta$ are related multiplicatively through $\sigma,$ we
can absorb all constant factors appearing in the analysis of ``NumericSparse''
in the $O(\delta)$-term.
\end{proof}

Our goal is to invoke a ``transfer theorem'' that translates the privacy
guarantee of the algorithm into a generalization bound for the adaptive setting.
The following theorem due to Bassily et al.~\cite{BassilyNSSSU16} intuitively
shows that an $(\epsilon, \delta)$-differentially private algorithm is unable to
find a function that generalizes poorly.

\begin{theorem}[Theorem 7.2 in \cite{BassilyNSSSU16}]
\theoremlabel{bassily}
Let $\epsilon\in(0, 1/3), \delta\in (0, \epsilon/4),$ and $n\ge
\frac1{\epsilon^2}\log(4\epsilon/\delta)$. Let ${\cal M}$ be an
$(\epsilon,\delta)$-differentially private algorithm that, on input of a sample~$S$
of size $n$ drawn i.i.d.~from the population ${\cal D},$
returns a function $f\colon X\to[0, 1].$ Then,
\[
\Pr_{S, {\cal M}}\Set{\left|R_S(f)-R(f)\right| > 18\epsilon} < \frac\delta\epsilon\,.
\]
\end{theorem}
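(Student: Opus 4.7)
My plan is to prove the transfer theorem by the monitor argument of Bassily et al., which converts a high-probability generalization failure into an expectation that can be contradicted directly by differential privacy.

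First I would reduce to a one-sided bound: writing $Z = R_S(f) - R(f)$ for $f = \mathcal{M}(S)$, it suffices to show $\Pr[Z > 18\epsilon] \leq \delta/(2\epsilon)$, so assume for contradiction this probability exceeds $\delta/(2\epsilon)$. Then I would introduce a monitor. Draw $T = \Theta(\epsilon/\delta)$ independent datasets $S_1,\dots,S_T$ of size $n$, run $\mathcal{M}$ on each to obtain $f_j = \mathcal{M}(S_j)$, and define $\mathcal{M}^*(S_1,\dots,S_T) = (j^*, f_{j^*})$, where $j^* \in \arg\max_j \bigl(R_{S_j}(f_j) - R(f_j)\bigr)$. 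By independence across the $T$ runs and the assumed per-run failure probability, with constant probability some $j$ witnesses a gap exceeding $18\epsilon$, so $\E\bigl[R_{S_{j^*}}(f_{j^*}) - R(f_{j^*})\bigr] \geq \Omega(\epsilon)$.

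Next I would use differential privacy of $\mathcal{M}^*$ to upper-bound the same expectation. Regarded as a mechanism on the concatenated dataset of $nT$ points, $\mathcal{M}^*$ is still $(\epsilon, \delta)$-differentially private with respect to a single element change, because each point lies in exactly one $S_j$ and is fed only to that one invocation of $\mathcal{M}$. A coupling argument then applies point-by-point: replacing a single $(x_i,y_i)$ from the selected sample by a fresh independent draw $(x'_i,y'_i)$ shifts the joint distribution of $(j^*, f_{j^*})$ by only $(\epsilon, \delta)$ in total variation, while on a fresh independent point the conditional expected loss of $f_{j^*}$ equals $R(f_{j^*})$. Averaging over the $n$ positions of the selected sample and using $e^\epsilon \le 1 + 2\epsilon$ for $\epsilon \le 1/3$ yields $\E\bigl[R_{S_{j^*}}(f_{j^*}) - R(f_{j^*})\bigr] \le O(\epsilon) + \delta$, which contradicts the lower bound from the monitor construction once the constant $18$ in the statement is calibrated.

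The main obstacle is the monitor step. The per-sample privacy bound gives $\E[Z_j] \le O(\epsilon) + \delta$ for each fixed $j$, but this does not control $\E[\max_j Z_j]$, and a naive union bound over $T \approx \epsilon/\delta$ would destroy the quantitative statement. The key move in Bassily et al.\ is to observe that selecting $j^*$ is itself a privacy-preserving operation on the merged dataset, so the cost of looking at the worst of the $T$ runs is absorbed into the original privacy parameters rather than paid through a union bound. The hypothesis $n \ge \epsilon^{-2}\log(4\epsilon/\delta)$ enters when passing from the expectation bound to a tail bound of order $\delta/\epsilon$ and when handling the residual $\delta$ term cleanly; tracking the constants to recover exactly the $18\epsilon$ slack in the statement is a bookkeeping exercise rather than a conceptual one.
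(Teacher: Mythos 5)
The paper does not prove this statement: it is quoted verbatim (modulo a reformulation in terms of risk) as Theorem~7.2 of Bassily et al.~\cite{BassilyNSSSU16}, and the only comment the author adds is that the restated version follows from the original because the empirical risk has sensitivity~$1/n$. So there is no proof in the paper for you to be compared against; what you have written is an attempt to reconstruct the proof from the cited reference.

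As a reconstruction, your sketch has the right overall shape (reduce to an expectation bound via a monitor over $T\approx\epsilon/\delta$ independent copies, then contradict DP by a per-coordinate resampling argument), but it glosses over the step that is in fact the crux of the theorem. You assert that $\mathcal{M}^*(\vec S)=(j^*,f_{j^*})$ with $j^*=\arg\max_j\bigl(R_{S_j}(f_j)-R(f_j)\bigr)$ is $(\epsilon,\delta)$-differentially private ``because each point lies in exactly one $S_j$.'' Parallel composition only gives privacy for the tuple $(f_1,\dots,f_T)$. The selection $j^*$ additionally uses the quantities $R_{S_j}(f_j)$, which depend on the raw samples and are \emph{not} a post-processing of the private outputs $\{f_j\}$; changing one point of $S_1$ perturbs $R_{S_1}(f_1)$ by $1/n$ even under a coupling that fixes $f_1$, and since $\arg\max$ is discontinuous this can flip $j^*$ outright, so DP of $\mathcal{M}^*$ does not follow by post-processing. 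This is exactly where Bassily et al.\ invoke that $R_S(\cdot)$ is a sensitivity-$1/n$ statistic: their de-correlated expectation lemma is stated for a private mechanism that outputs a low-sensitivity function together with an index, and the $1/n$ sensitivity shows up explicitly in the resulting bound (it is also exactly the observation the paper's one-line remark alludes to). Your proposal folds this into a claimed privacy guarantee that does not hold as stated. Two smaller points: your monitor should select on $|R_{S_j}(f_j)-R(f_j)|$ and then sign-correct the output so that the maximizing gap is guaranteed nonnegative, otherwise the lower bound $\E[\max_j Z_j]\ge\Omega(\epsilon)$ is not clean; and the hypothesis $n\ge\epsilon^{-2}\log(4\epsilon/\delta)$ is what permits choosing $T=\Theta(\epsilon/\delta)$ while keeping both $T$ and the residual $T\delta$ term under control, which you should make explicit rather than treating as bookkeeping.
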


The original theorem is stated slightly differently. This version follows from
the fact that the empirical risk with respect to a bounded loss function has
``sensitivity'' $1/n$ in the terminology of~\cite{BassilyNSSSU16}.

Relevant to us is the following corollary.

\begin{corollary}
\corollarylabel{generalization}
Let $f_1,\dots,f_k$ be the functions encountered by the Shaky Ladder algorithm 
(\figureref{ladder}). Then, taking probability over both the sample~$S$
and the randomness of the algorithm, we have
\[
\Pr\Set{\max_{1\le t\le k}\left|R_S(f_t)-R(f_t)\right| > 18\epsilon\sqrt{B}} <
O\left(\frac{k\delta}\epsilon\right)\,.
\]
\end{corollary}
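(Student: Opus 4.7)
The plan is to apply \theoremref{bassily} to each function $f_t$ encountered by the algorithm and then union-bound over $t \in \{1,\dots,k\}$. The previous lemma asserts that the Shaky Ladder mechanism is $(\epsilon\sqrt{B}, O(\delta))$-differentially private; since each $f_t$ is obtained by post-processing the algorithm's output so far through the (deterministic) analyst $\cA$, the composed map from the sample $S$ to $f_t$ inherits the same privacy parameters. Invoking \theoremref{bassily} with privacy parameters $(\epsilon\sqrt{B}, O(\delta))$ gives, for each fixed $t$,
\[
\Pr\Set{\abs{R_S(f_t) - R(f_t)} > 18\epsilon\sqrt{B}} \le \frac{O(\delta)}{\epsilon\sqrt{B}} \le O(\delta/\epsilon),
\]
and the desired bound $O(k\delta/\epsilon)$ follows from a union bound over $t$.

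The one technical nuisance is that $B$ is a random variable of the execution, whereas \theoremref{bassily} is stated for a deterministic privacy parameter. The cleanest way to handle this is a standard truncation-and-union argument. For each $B_0 \in \{1,\dots,k\}$, let $\cM_{B_0}$ denote the variant of the Shaky Ladder that behaves as in \figureref{ladder} until $B_0$ updates have occurred and then stops issuing any further updates. The privacy lemma applies unconditionally to $\cM_{B_0}$ with deterministic parameter $(\epsilon\sqrt{B_0}, O(\delta))$, and $\cM_{B_0}$ coincides with the untruncated algorithm on the event $\{B \le B_0\}$. Applying \theoremref{bassily} to $\cM_{B_0}$ for each value of $B_0$ and slicing the joint probability across the $k+1$ possible values of $B$ recovers the bound with $\sqrt{B}$ in place of $\sqrt{B_0}$, at the cost of only an additional factor of $k$ that is harmless in light of the parameter choices in \eqref{eq:parameters} (or that can be absorbed by a mild rescaling of $\delta$).

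The main obstacle in this argument is precisely the reconciliation of the data-dependent privacy guarantee with the fixed-parameter transfer theorem; once the truncation trick is in place, everything else reduces to a routine union bound. This sets up the subsequent analysis nicely: any time one can argue that $B$ is small with high probability, the generalization error $\epsilon\sqrt{B}$ in the corollary is correspondingly small, which is exactly the lever the Shaky Ladder analysis will use to beat the $n^{-1/3}$ rate.
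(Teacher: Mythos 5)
Your proof follows the paper's approach — apply the transfer theorem of \theoremref{bassily} to each $f_t$ (noting that post-processing by the analyst preserves the privacy guarantee) and union-bound over $t$ — but you are noticeably more careful than the paper about a genuine subtlety that the paper's terse proof glosses over. The privacy lemma states that the Shaky Ladder is $(\epsilon\sqrt{B}, O(\delta))$-differentially private, but $B$ is a random variable of the execution, whereas \theoremref{bassily} is a statement about a mechanism with a \emph{fixed} privacy parameter. The paper's proof simply asserts ``every function $f_t$ is generated by an $(\epsilon',\delta')$-differentially private algorithm'' without reconciling these, and even uses the slightly confusing notation $\epsilon' = 18\epsilon\sqrt{B}$ (the privacy parameter should be $\epsilon\sqrt{B}$, with $18\epsilon\sqrt{B}$ the resulting accuracy). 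Your truncation-and-union argument — defining $\cM_{B_0}$ for each deterministic cap $B_0$, invoking the transfer theorem for each, and slicing on $\{B = B_0\}$ using the coupling $\cM_{B_0} = \cM$ on $\{B \le B_0\}$ — is a legitimate way to make this precise. The extra polynomial-in-$k$ factor you pick up (a more careful count gives $\sum_{B_0 \le k} O(k\delta/(\epsilon\sqrt{B_0})) = O(k^{3/2}\delta/\epsilon)$, so somewhat better than the factor $k$ you quote) is indeed absorbable by replacing $\delta = \beta/(kn)$ with, say, $\delta = \beta/(k^2 n)$, which only perturbs the logarithms in $\sigma$ by a constant factor. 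In fact, the slicing over all $B_0$ is not strictly necessary: the downstream use of this corollary in \theoremref{ub} only requires the statement under the conditioning event $\{L \le \lambda/4\}$, on which $B$ is deterministically bounded by $4/\lambda$, so one can instantiate the truncation once at $B_0 = \lceil 4/\lambda \rceil$ and avoid the extra factor entirely. In short: same route as the paper, but you've correctly identified and repaired a gap in rigor.
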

\begin{proof}
Let $\epsilon'=18\epsilon\sqrt{B}$ and $\delta'=O(k\delta/\epsilon).$
To apply \theoremref{bassily} we need to observe that the composition of the
Shaky Ladder algorithm with an arbitrary analyst (who does not otherwise have
access to the sample~$S$) satifies $(\epsilon', \delta')$-differential privacy at
every step of the algorithm. Hence, every function~$f_t$ is generated by an
$(\epsilon', \delta')$-differentially private algorithm so that the theorem
applies. The corollary now follows from a union bound over all~$k$ functions.
\end{proof}

\begin{lemma}
\lemmalabel{lberr}
Let $L_1,\dots,L_{3k+1}$ be all the Laplacian variables generated by our algorithm
and consider the maximum absolute value $L=\max_{1\le i \le k'} |L_i|.$ Then,
\[
\Pr\Set{
\lberr(R_1,\dots,R_k) >
18\epsilon\sqrt{B} + \lambda + 2L}
\le O\left(\frac{k\delta}\epsilon\right)
\]
\end{lemma}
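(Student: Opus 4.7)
The plan is to apply the transfer theorem (\corollaryref{generalization}) for generalization and then carry out a deterministic per-realization analysis that converts generalization error into leaderboard error. Let $\alpha = 18\epsilon\sqrt{B}$. \corollaryref{generalization} tells us that, except on an event of probability $O(k\delta/\epsilon)$, every submitted model satisfies $|R_S(f_t)-R_\cD(f_t)|\le \alpha$; condition on this good event $\cG$ and on any realization of the $3k+1$ Laplace variables, each of absolute value at most $L$ by definition. It then suffices to show $|R_t - m_t|\le \alpha + \lambda + 2L$ deterministically for every $t$, where $m_t=\min_{i\le t}R_\cD(f_i)$.

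Fix $t$ and let $i^*\le t$ satisfy $R_\cD(f_{i^*})=m_t$. The lower bound $m_t-R_t\le \alpha+L$ is immediate: if no update has happened by round $t$ then $R_t=R_0=1\ge m_t$; otherwise $R_t=R_S(f_{T_t})+\xi_{T_t}'\ge R_\cD(f_{T_t})-\alpha-L\ge m_t-\alpha-L$, where $T_t$ is the last update time. For the upper bound I first anchor $R_{i^*}$. If an update fires at time $i^*$, then $R_{i^*}=R_S(f_{i^*})+\xi_{i^*}'\le m_t+\alpha+L$. Otherwise the failed check $R_S(f_{i^*})+\xi_{i^*}\ge R_{i^*-1}-\lambda+\xi$ combined with $R_{i^*}=R_{i^*-1}$ yields $R_{i^*}\le R_S(f_{i^*})+\lambda+(\xi_{i^*}-\xi)\le m_t+\alpha+\lambda+2L$. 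In both cases $R_{i^*}\le m_t+\alpha+\lambda+2L$, and since $m_s=m_t$ throughout $i^*\le s\le t$ the same bound propagates to $R_t$ provided $(R_s)_{s\ge i^*}$ is non-increasing along that range.

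The one place that requires real work is this propagation. On non-update rounds $R_s=R_{s-1}$ trivially. On an update round, the successful check yields
\[
R_s \;=\; R_S(f_s)+\xi_s' \;<\; R_{s-1}-\lambda+\xi-\xi_s+\xi_s' \;\le\; R_{s-1}-(\lambda-3L),
\]
so $R_s\le R_{s-1}$ as soon as $\lambda\ge 3L$. The parameter choice $\lambda=4\log(4k/\beta)\sigma$ comfortably dominates the typical size $L=O(\sigma\log(k/\beta))$ of the maximum of $3k+1$ Laplacian draws, so the monotonicity condition holds with overwhelming probability. I expect this to be the main delicacy, since $L$ appears in the bound as a random quantity: one either folds the noise-tail event into the failure probability (absorbing it in $\beta$ without affecting the final rate) or slightly inflates the coefficient on $L$ in the inductive step. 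Once monotonicity is secured, the anchor bound extends to $R_t$ for every $t$; combining the two directions and taking the maximum over $t$ together with a union bound over the failure of $\cG$ closes the lemma.
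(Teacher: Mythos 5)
Your proof carries out the induction that the paper's own argument merely gestures at. The paper's proof is a three‑sentence sketch: it observes that the comparison step reads $R(f_t)$ up to error $18\epsilon\sqrt{B}+\lambda+2L$, that the released value reads $R(f_t)$ up to error $18\epsilon\sqrt{B}+L$, and then declares ``by induction, the estimates achieve the specified leaderboard error'' without doing the induction. Your argument supplies what's missing: the two‑sided error analysis, the anchoring at $i^*$, and — crucially — the propagation step. You correctly recognize that propagation is the only nontrivial part, and you correctly locate the delicacy: on an update round the successful check gives $R_s < R_{s-1} - \lambda + \xi - \xi_s + \xi_s'$, so $R_s \le R_{s-1}$ is guaranteed only when $\lambda \ge 3L$.

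This observation is a genuine one. The lemma as literally stated does not hold unconditionally: if $\lambda$ is small relative to the realized noise $L$, one can cook up noise realizations (e.g., $\xi_{t}''$ large positive, then $\xi_{t+1}$ large negative and $\xi_{t+1}'$ large positive) that trigger an update on a poor model and push $R_{t+1}$ far above $m_{t+1}+\alpha+\lambda+2L$, so the claimed tail bound $O(k\delta/\epsilon)$ is not correct without an additional condition on the noise. Of the two fixes you mention, the right one is the first: condition on $L\le\lambda/3$ (or $\lambda/4$), fold its complement into the failure probability, and note that under the paper's parameter choice $\lambda = 4\log(4k/\beta)\sigma$ this costs an extra $\beta$, which is exactly what \theoremref{ub} pays anyway by invoking \lemmaref{Ltail}. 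Your second suggested fix — inflating the coefficient on $L$ in the inductive step — does not work cleanly, because the excess $3L-\lambda$ can compound across up to $B$ updates. So state the monotonicity condition as an explicit hypothesis (or as conditioning on the high‑probability event) and you have a complete and correct proof — one that is, in fact, more honest than the paper's.
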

\begin{proof}
In the comparison step of the algorithm at step~$t$, note that
\[
R_S(f_t)+\xi_t +\lambda + \xi
= R(f_t) + e,
\]
where $|e|\le 18\epsilon\sqrt{B} + \lambda + 2L.$ Here we used
\corollaryref{generalization}, as well as our bound on the Laplacian random
variables. Similarly, if we update $R_t$ at step~$t$, we have that
\[
|R_t - R(f_t)| \le 18\epsilon\sqrt{B} + L\,.
\]
Hence, we can think of our algorithm as observing the population risk of each
classifier up to the specified error bound. This implies, by induction, that the
estimates achieve the specified leaderboard error.
\end{proof}

We have the following tail bound for the quantity~$L$ that appeared in
\lemmaref{lberr}.

\begin{lemma}
\lemmalabel{Ltail}
For every $\beta>0,$
$\Pr\Set{L>\log(4k/\beta)\sigma}\le \beta\,.$
\end{lemma}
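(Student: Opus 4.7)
The plan is to apply a standard tail bound for the Laplace distribution together with a union bound over the $3k+1$ noise variables. Specifically, for a single variable $X \sim \Lap(\sigma)$, the cumulative density satisfies $\Pr\{|X| > t\} = \exp(-t/\sigma)$ for every $t \ge 0$. Substituting $t = \log(4k/\beta)\sigma$ yields $\Pr\{|X| > \log(4k/\beta)\sigma\} = \beta/(4k)$.

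Next I would take a union bound over the $3k+1$ Laplace variables $L_1,\dots,L_{3k+1}$ drawn by the algorithm (the initial $\xi$ and the three variables $\xi_t,\xi_t',\xi_t''$ generated in each of the $k$ rounds). Assuming $k \ge 1$ we have $3k+1 \le 4k$, so
\[
\Pr\Set{L > \log(4k/\beta)\sigma} \le (3k+1)\cdot\frac{\beta}{4k} \le \beta\mper
\]

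There is no real obstacle here: the main (minor) care is just to confirm the count of Laplace samples matches $3k+1$ and to choose the threshold $\log(4k/\beta)\sigma$ precisely so that the per-variable tail probability is $\beta/(4k)$, which then absorbs the loose factor $(3k+1)/(4k) \le 1$ cleanly. The specific constant $4$ inside the logarithm is what motivates the factor $4\log(4k/\beta)\sigma$ that was already used to define $\lambda$ in the parameter settings \eqref{eq:parameters}, so the choice is consistent with downstream usage in \lemmaref{lberr}.
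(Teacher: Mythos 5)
Your proof is correct and follows essentially the same route as the paper: a Laplace tail bound $\Pr\{|\Lap(\sigma)|>t\sigma\}=\exp(-t)$ applied with $t=\log(4k/\beta)$, followed by a union bound over the $3k+1\le 4k$ Laplace samples. The only cosmetic difference is that the paper carries out the tail integral explicitly while you invoke it directly; the final observation about consistency with the choice of $\lambda$ is accurate but not part of the lemma's proof.
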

\begin{proof}
Note that $L$ is the maximum of at most $4k$ centered Laplacian random variables
with standard deviation~$\sigma.$ For a single such random variable, we have
\[
\Pr\{\left|\Lap(\sigma)\right|> t\sigma\}
= 2\int_{t\sigma}^\infty \frac1{2\sigma}\exp(-r/\sigma)\mathrm{d}r
=\int_{t}^\infty \exp(-u)\mathrm{d}u
= \exp(-t)\,.
\]
The claim now follows by applying this bound with $t=\log(4k/\beta)$ and taking a union bound over all $3k+1\le 4k$ Laplacian variables which $L$ is the maximum of.
\end{proof}

We also need to bound the number of update steps~$B.$ This is easy to do assuming we have a bound on~$L.$

\begin{lemma}
$\Pr\Set{B\le 4/\lambda \mid L \le \lambda/4} = 1.$
\end{lemma}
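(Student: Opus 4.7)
The plan is to argue, deterministically under the conditioning $L \le \lambda/4$, that each of the $B$ updates strictly drops the running estimate $R_t$ by a definite margin, and then to combine this per-step decrease with the fact that $R_t$ is confined to a bounded interval.

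First, I would unpack the conditioning event. The quantity $L$ is the maximum absolute value over all Laplacian variables ever produced by the algorithm, namely the initial threshold noise $\xi$ (and each refreshed replacement $\xi_s''$), and each pair $\xi_t, \xi_t'$ drawn at round $t$. Conditioning on $L \le \lambda/4$ therefore forces each of these variables into $[-\lambda/4, \lambda/4]$.

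Second, I would analyze a single update at round $t$. The trigger condition $R_S(f_t) + \xi_t < R_{t-1} - \lambda + \xi$ combined with the assignment $R_t = R_S(f_t) + \xi_t'$ yields
\[
R_t - R_{t-1} \;<\; -\lambda + \xi - \xi_t + \xi_t' \;\le\; -\lambda + 3L \;\le\; -\frac{\lambda}{4}.
\]
Hence every update drops the current estimate by strictly more than $\lambda/4$. The key point is that $\lambda = 4\log(4k/\beta)\sigma$ was chosen precisely so that, once $L \le \lambda/4$, the $3L$ slack coming from the three independent noise terms is still dominated by $\lambda$.

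Third, I would bound the range of $R_t$. Since the algorithm initializes $R_0 = 1$, and at every update $R_t = R_S(f_t) + \xi_t' \ge 0 - L \ge -\lambda/4$ (because the empirical risk is non-negative), the cumulative drop over all updates is at most $1 + \lambda/4$. Dividing by the per-update decrease $\lambda/4$ gives $B \le 4/\lambda$, as claimed. The step is ``easy'' as the paper flags; the only thing requiring care is tracking the signs of the three noise variables in the update analysis, so that even the worst-case sign configuration still leaves a positive margin.
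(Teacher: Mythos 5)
Your proof is correct and follows essentially the same route as the paper's: under the conditioning $L \le \lambda/4$, each update step decreases $R_t$ by more than $\lambda/4$, and the bounded range of $R_t$ then caps the number of updates at roughly $4/\lambda$. You make explicit the bounded-range step (that $R_0 = 1$ and $R_t \ge -\lambda/4$ after any update, since the empirical risk is nonnegative) that the paper's proof leaves implicit behind the phrase ``in particular.''
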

\begin{proof}
Assume that $L\le \lambda/4.$ This implies that whenever $t$ satisfies
\begin{equation}\label{update}
R_S(f_t) +\xi_t < R_{t-1} -\lambda + \xi,
\end{equation}
we must also have $R_S(f_t) < R_{t-1} - \lambda/2.$ Since $R_t = R_S(f_t)+\xi'_t,$ 
we also have $R_t < R_S(f_t)+\lambda/4.$ Therefore, $R_t < R_{t-1} -\lambda/4.$ In particular, we can have at most $4/\lambda$ rounds~$t$ for which the event~\eqref{update} occurs.
\end{proof}

\begin{theorem}
\theoremlabel{ub}
There is a constant $C>0$ such that with suitably chosen parameter settings the
Shaky Ladder algorithm (\figureref{ladder}) satisfies for any sequence of
adaptively chosen classifiers $f_1,\dots,f_k,$
\[
\Pr\Set{
\lberr(R_1,\dots,R_k)
> C\cdot\frac{\log(k/\beta)^{2/5}\log(kn/\beta)^{1/5}}{n^{2/5}}
}
\le \beta\,.
\]
\end{theorem}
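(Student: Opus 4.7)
The plan is to combine the three preceding lemmas via a union bound and verify that the chosen parameter settings balance the three error contributions $18\epsilon\sqrt{B}$, $\lambda$, and $2L$ appearing in \lemmaref{lberr}. First I would invoke \lemmaref{Ltail} to obtain, except on an event of probability at most~$\beta$, the bound $L \le \log(4k/\beta)\sigma$. By the choice $\lambda = 4\log(4k/\beta)\sigma$ this immediately gives $L \le \lambda/4$, which is exactly the hypothesis of the lemma bounding $B$. Consequently, deterministically on this event, $B \le 4/\lambda$.

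Plugging these into the high-probability bound of \lemmaref{lberr} (whose failure probability is $O(k\delta/\epsilon)$), the leaderboard error is controlled by
\[
18\epsilon\sqrt{4/\lambda} + \lambda + 2\log(4k/\beta)\sigma
\]
with probability at least $1 - O(k\delta/\epsilon) - \beta$. The last two terms are, up to constants, both of order $\log(4k/\beta)\sigma$, so the remaining task is arithmetic: substitute $\sigma = \sqrt{\log(1/\delta)}/(\epsilon n)$, $\lambda = 4\log(4k/\beta)\sigma$, $\delta = \beta/(kn)$, and check that the definition $\epsilon = (\log(k/\beta)\sqrt{\log(1/\delta)}/n)^{3/5}$ is precisely the choice that balances $\epsilon\sqrt{B} \asymp \epsilon^{3/2}\sqrt{n}/(\log(k/\beta)^{1/2}\log(1/\delta)^{1/4})$ against $\lambda \asymp \log(k/\beta)\sqrt{\log(1/\delta)}/(\epsilon n)$. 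Setting these equal and solving recovers the exponent $3/5$ for $\epsilon$, and substituting back yields
\[
\lambda \;\asymp\; \frac{\log(k/\beta)^{2/5}\log(1/\delta)^{1/5}}{n^{2/5}}
\;=\; \frac{\log(k/\beta)^{2/5}\log(kn/\beta)^{1/5}}{n^{2/5}},
\]
which is the claimed rate.

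Finally, I would check that the total failure probability is $O(\beta)$. With $\delta = \beta/(kn)$, the term $O(k\delta/\epsilon)$ from \corollaryref{generalization} becomes $O(\beta/(n\epsilon))$, and since $n\epsilon = n^{2/5}\cdot(\log(k/\beta)\sqrt{\log(1/\delta)})^{3/5} \ge 1$ in the nontrivial regime, this is at most $O(\beta)$; together with the $\beta$ from \lemmaref{Ltail} one gets a combined failure probability of $O(\beta)$, which can be absorbed into $\beta$ by adjusting $C$. The only real subtlety, and thus the main place to be careful, is keeping the logarithmic factors clean when substituting the optimized $\epsilon$ back into the bound: one must verify that $\log(1/\delta) = \log(kn/\beta)$ is the right factor emerging and that the constants from \theoremref{bassily} and from the Laplacian tail bound all fold into a single absolute constant~$C$. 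Everything else is bookkeeping on the three bounds already established.
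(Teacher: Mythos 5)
Your argument is essentially identical to the paper's: condition on the good event from \lemmaref{Ltail}, feed $L\le\lambda/4$ into the bound $B\le 4/\lambda$, plug everything into \lemmaref{lberr}, do the balancing arithmetic that recovers the $3/5$ exponent for $\epsilon$, and control the failure probability using $\delta=\beta/(kn)$ together with $n\epsilon\ge 1$. The one piece of bookkeeping to state more cleanly is the last step: the combined failure probability is $O(\beta)$, not $\beta$, so the right move is to rescale $\beta$ in the parameter settings (which changes the logarithmic factors only by constants absorbed into $C$) rather than ``absorbing into $\beta$ by adjusting $C$,'' but this is exactly what the paper does and your intent is clear.
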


\begin{proof}
Consider the event~${\cal G}$ that simultaneously
$L\le \log(4k/\beta)\sigma,$
and 
$\lberr(R_1,\dots,R_k) \le 18\epsilon\sqrt{B} + \lambda + 2L.$
Invoking our tail bounds from
\lemmaref{Ltail} and \lemmaref{lberr}, we have
that
\[
\Pr\Set{\cal G}\ge 1-O(k\delta/\epsilon) - \beta \ge 1 - O(\beta)\,.
\]
Here we used the definition of~$\delta$ and the fact that $\epsilon\ge 1/n.$

Proceeding under the condition that ${\cal G}$ occurs, we can plug in our parameter settings from \equationref{parameters} to verify that
\[
\lberr(R_1,\dots,R_k)
\le 18\epsilon\sqrt{B} + \lambda + 2L
\le O\left(\frac{\log(k/\beta)^{2/5}\log(kn/\beta)^{1/5}}{n^{2/5}}\right)\,.
\]
Rescaling $\beta$ to eliminate the constant in front of the error probability bound establishes the bound claimed in the theorem.
\end{proof}

\section{Connection to general adaptive estimation}
\sectionlabel{reduction}

In the general adaptive estimation setting, the adaptive analyst choose a
sequence of bounded functions $g_1,\dots,g_k\colon X\to[0,1]$ usually called
\emph{queries}. The algorithm must return estimates~$a_1,\dots,a_k$ in an
online fashion such that each estimate $a_k$ is close to the population
expectation~$\E_{\cal D}g_k.$ We will refer to algorithms in this setting as
\emph{general adaptive estimators} to distinguish them from \emph{leaderboard
algorithms} that we studied earlier.  The following definition of accuracy is
common in the literature. 

\begin{definition}
We say that a general adaptive estimator $\cal B$ is \emph{$(\alpha, \beta)$-accurate} on $n$ samples and $k$ queries if for every distribution over~$X,$ given $n$ samples from the distribution and adaptively chosen queries $g_1,\dots,g_k\colon X\to[0,1],$ the algorithm ${\cal B}$ returns estimates $a_1,\dots, a_k$ such that
$\Pr\Set{\max_{1\le t\le k} \left|\E_{\cal D} g_t - a_t\right|\le \alpha}\ge
1-\beta\,.$
\end{definition}

To bear out the connection with the leaderboard setting, we introduce an analogous definition for leaderboard error.

\begin{definition}
We say that a leaderboard algorithm $\cal A$ is \emph{$(\alpha, \beta)$-accurate} on $n$ samples and $k$~classifiers if for every distribution over $X\times Y$ and every bounded loss function, given $n$ samples and adaptively chosen sequence of classifiers $f_1,\dots,f_k\colon X\to Y,$ the algorithm ${\cal A}$ returns estimates $R_1,\dots, R_k$ such that
$\Pr\Set{\lberr(R_1,\dots,R_k) \le \alpha}\ge 1-\beta\,.$
\end{definition}

Given these definition, we can show a reduction from designing general adaptive estimators to designing leaderboard algorithms in the regime where the number of queries~$k$ is small.

\begin{theorem}
\theoremlabel{reduction}
Suppose there exists a leaderboard algorithm $\cal A$ that is $(\alpha/2, \beta)$-accurate on $n$ samples and $1/\alpha^2$ classifiers. Then, there exists a general adaptive estimator~$\cal B$ that is $(\alpha, \beta)$-accurate on $k=1/3\alpha$ queries. Moreover if ${\cal A}$ is computationally efficient, then so is ${\cal B}.$
\end{theorem}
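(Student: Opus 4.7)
My plan is to construct the general adaptive estimator $\cB$ by running $\cA$ internally on the sample $S$ and translating each of the $k=1/(3\alpha)$ adaptive queries $g_t$ into a batch of $3/\alpha$ classifier submissions, so that the total number of submissions to $\cA$ equals $1/\alpha^2$, exactly matching $\cA$'s capacity. The key enabling fact is that, by jointly choosing the loss $\ell$ and classifier $f$, any bounded function $h\colon X\times Y\to[0,1]$ can be realized as a risk $R_\cD(f)=\E_\cD h$. In particular, $\cB$ can submit to $\cA$ a classifier whose risk equals any desired affine expression $a + b\,\E_\cD g_t$ with $a, a+b\in[0,1]$, by using $h(x,y)=a+bg_t(x)$ as the pointwise loss.

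For each query $g_t$, I would design the batch as a descending probe ladder sitting just below the current running minimum of $\cA$: a sequence of $3/\alpha$ classifier risks, each an affine function of $\E g_t$, arranged so that (i) every risk lies in $[0,1]$, (ii) the running minimum of $\cA$ drops by at most $3\alpha$ per batch and thus by at most $1$ over all $k$ queries, and (iii) the leaderboard's $\alpha/2$ error on the running minimum propagates to error at most $\alpha$ in a decoded estimate $a_t$. The decoder reads out $a_t$ from the pattern of running-minimum updates observed during the batch.

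For the error analysis I would condition on the event $\cE$ of probability $\ge 1-\beta$ (granted by the hypothesis on $\cA$) that $\lberr\le\alpha/2$ simultaneously across all $1/\alpha^2$ submissions. On $\cE$, property (iii) of the ladder yields $\max_t|a_t-\E_\cD g_t|\le\alpha$ deterministically, which is exactly the $(\alpha,\beta)$-accuracy required of $\cB$, with no further union bound needed because one high-probability event for $\cA$ covers all queries at once. Efficiency of $\cB$ is inherited from $\cA$ since the batch construction and decoding are $\poly(1/\alpha)$ per query.

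The hard part is the precise design of the probe ladder. Because $\cA$ exposes only a monotone running minimum, each query's read-out is tightly coupled to the consumption of descent budget: the affine slope of $\E_\cD g_t$ in the submitted risks must be large enough to overcome $\alpha/2$ leaderboard noise, yet the batch must consume only $\approx 3\alpha$ of the $[0,1]$ range so that all $k$ queries fit. Making the constants line up so that $1/\alpha^2$ classifiers suffice for $1/(3\alpha)$ queries with exactly the stated accuracy is the main algebraic work; everything else is routine bookkeeping.
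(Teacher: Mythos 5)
Your proposal matches the paper's proof in every essential respect: you encode each query $g_t$ as a batch of classifier risks that are affine in $\E_\cD g_t$ (the paper fixes $Y=[0,1]$, $\ell(y,y')=y$, and submits $f_{t,i}=c+\frac12(g_t-i\alpha)$ so that $R_\cD(f_{t,i})=c-\frac{i\alpha}{2}+\frac12\E_\cD g_t$), you slide a descending probe ladder just below the current running minimum~$c$, decode $a_t$ from the first index $i$ that triggers an update (the paper's $a_t=2(r_{t,i}-c+i\alpha/2)$), and you invoke the single high-probability leaderboard event once to cover all queries. The "main algebraic work" you defer is in fact light: with slope $\frac12$ and step $\frac{\alpha}{2}$, the $\alpha/2$ leaderboard error translates directly to $|a_t-\E_\cD g_t|\le\alpha$, and the threshold drops by at most $\frac{3\alpha}{2}$ per batch, which is precisely the descent-budget bookkeeping the paper's two claims verify.
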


\begin{proof}
Assume the existence of~${\cal A}$ and construct ${\cal B}$ as follows. Let~${\cal D}$ be the distribution over~$X$ for which ${\cal B}$ needs to be accurate.
Take the range~$Y=[0,1]$ and let the loss function be $\ell(y, y') = y.$ With this loss function, we can think of a query $g\colon X\to[0,1]$ as a classifier that satisfies $R_{\cal D}(g) = \E_{\cal D} g.$

At each step $1\le t\le k,$ the algorithm ${\cal B}$ receives a query $g_t$
from an adaptive analyst and has to use the algorithm ${\cal A}$ to answer the
query. The algorithm~$\cal B$ is described in \figureref{reduction}. Note that
all functions constructed in this procedure range in~$[0,1].$

Our first claim shows that if ${\cal A}$ has small leaderboard error, then the answers extracted from the above procedure are accurate.
\begin{claim}
If ${\cal A}$ has leaderboard error $\alpha/2,$ then $|a_t-\E_{\cal D} g_t|\le\alpha.$
\end{claim}
\begin{proof}
First note that by construction
\[
R(f_{t,i}) = c -\frac{i\alpha}2 + \frac12\E_{\cal D}g_t\,.
\]
By the definition of leaderboard error and our assumption, if $R(f_{t,i}) < c-\alpha/2,$ the algorithm ${\cal A}$ must output a value~$r_{t,i}$ that is lower than $c$ and moreover satisfies $|r_{t,i}-R(f_{t,i})|\le\alpha/3.$ By definition, $r_{t,i} = a_t/2 + c - i\alpha/2$ and therefore,
\[
r_{t,i}-R(f_{t,i}) = \frac{a_t}2 - \frac{\E_{\cal D}g_t}2.
\]
Hence,
\[
\left|a_t-\E_{\cal D}g_t\right|\le \alpha.
\]
\end{proof}

Our second claim ensures that we don't lower the threshold~$c$ too quickly, thus allowing ${\cal B}$ to answer sufficiently many queries.
\begin{claim}
If ${\cal A}$ has leaderboard error $\alpha/2,$ then 
the procedure we run for each function $g_t$ lowers the threshold $c$ by at most $3\alpha/2.$
\end{claim}
\begin{proof}
Observe that $R(f_{t,i+1}) \ge R(f_{t,i})-\alpha/2.$ In other words, the difference in risk of any two consecutive classifiers is bounded by~$\alpha/2.$ Hence, $r_{t, i+1}\ge r_{t, i}-3/\alpha/2.$ Therefore, the threshold~$c$ can decrease by at most~$3\alpha/2.$
\end{proof}

Assuming ${\cal A}$ has leaderboard error~$\alpha/2,$
the previous claim implies that the algorithm~${\cal B}$ can use the algorithm~${\cal A}$ for up to $k'=1/3\alpha$ queries before the threshold $c$ reaches~$0.$ The total number of classifiers that ${\cal B}$ gives to ${\cal A}$ is bounded by $1/\alpha^2.$ 
\end{proof}
It is natural to ask if the converse of the theorem is also true. Ideally, we
would like to have a result showing that a general adaptive estimator for few
queries implies a leaderboard algorithm for many queries. However, at this
level of generality it is not clear why there should be such an argument to
amplify the number of queries. Of course, by definition, we can say that a
general adaptive estimator for $k$ queries implies a leaderboard algorithm for
$k$ queries with the same accuracy.
\begin{figure}[h]
\setlength{\fboxsep}{2mm}
\begin{center}
\begin{boxedminipage}{\textwidth}

\noindent {\bf Input:} Data sets $S$ with $n=|S|,$ blackbox access to algorithm~${\cal A}.$

\noindent {\bf Algorithm ${\cal B}$:}

Given the query~$g_t,$ the algorithm ${\cal B}$ runs the following sequence of queries against~${\cal A}:$

\begin{itemize}
\item Set the threshold $c\in[0,1/2]$ to be the last value that ${\cal A}$ returned. If ${\cal A}$ has not previously been invoked, set $c=1/2.$
\item For $i=0$ to $i=1/\alpha-1:$
\begin{itemize}
\item Construct the function
$f_{t,i} = c + \frac12(g_t-i\alpha).$
\item Give the function $f_{t,i}$ to ${\cal A}$ and observe its answer $r_{t,i}.$
\item If $r_{t,i} < c - \alpha/2,$ put $a_t = 2(r_{t,i}-c+i\alpha/2)$ and stop. Else, continue.
\end{itemize}
\end{itemize}
\end{boxedminipage}
\end{center}
\vspace{-3mm}
\caption{
\figurelabel{reduction} Reduction from general estimation to leaderboard estimation.
}
\end{figure}

\subsection{Lower bounds for faithful algorithms}
\sectionlabel{faithful}

In this section, we prove a lower bound on a natural class of leaderboard algorithms that we call \emph{faithful}. It includes both of the algorithms proposed by Blum and Hardt, the Ladder and the parameter-free Ladder algorithm. Both of these algorithms are deterministic, but the class of faithful algorithms also includes many natural randomization schemes.

\begin{definition}
A leaderboard algorithm is \emph{faithful} if given a sample $S$ of size $n$
for every adaptively chosen sequence of models $f_1,\dots,f_k$ its estimates
$(R_1,\dots,R_k)$ satisfy with probability~$2/3$ for all $1<t\le k$ such that
$R_t < R_{t-1},$ we also have $|R_t - R_S(f_t)| \le \frac 1{2\sqrt{n}}$
\end{definition}
In words, given that the algorithm updated its estimate, i.e., $R_t < R_{t-1},$ the new estimate is likely close to the empirical risk of the $t$-th model. The constants in the definition are somewhat arbitrary. Other choices are possible. What matters is that the algorithm returns something close to the empirical risk with reasonably high probability whenever it gives feedback at all.

To prove a lower bound against faithful algorithms, we will invoke our connection with the general estimation setting.
\begin{definition}
\definitionlabel{general-faithful}
A general adaptive estimator is \emph{faithful} if given a sample $S$ of size
$n$ for every sequence of adaptively chosen function~$g_1,\dots,g_k$ its
estimates $(a_1,\dots,a_k)$ satisfy with probability $2/3,$
$\forall t\colon \left|a_t - \frac1n\sum_{x\in S}g_t(x)\right|\le
\frac1{2\sqrt{n}}.$
\end{definition}

The reduction we saw earlier preserves faithfulness.

\begin{lemma}
\lemmalabel{faithful}
If ${\cal A}$ is a faithful leaderboard algorithm, then the algorithm~${\cal B}$ resulting from the reduction in~\figureref{reduction} is a faithful general adaptive estimator.
\end{lemma}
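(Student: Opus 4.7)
The plan is to trace each output $a_t$ produced by $\mathcal{B}$ back to a specific update step of $\mathcal{A}$ and then invoke the faithfulness of $\mathcal{A}$ to bound the deviation of $a_t$ from $\bar{g}_t := \frac{1}{n}\sum_{x\in S}g_t(x)$. The key observation is that the reduction is set up so that $a_t$ is a simple affine transformation of whatever value $\mathcal{A}$ returns at the moment $\mathcal{B}$ breaks out of its inner loop.

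First, I would compute the empirical risk of the constructed classifier under the loss $\ell(y,y')=y$ used in the reduction:
\[
R_S(f_{t,i}) \;=\; \frac{1}{n}\sum_{x\in S} f_{t,i}(x) \;=\; c \;-\; \frac{i\alpha}{2} \;+\; \frac{1}{2}\bar{g}_t\,.
\]
Next I would argue that whenever $\mathcal{B}$ stops the inner loop at index $i$ because $r_{t,i} < c-\alpha/2$, the leaderboard algorithm $\mathcal{A}$ must have executed an update at the corresponding round, i.e., strictly decreased its running estimate. This follows because $c$ equals $\mathcal{A}$'s best-so-far at the moment $f_{t,0}$ is first presented, and the stopping condition guarantees $r_{t,0},\dots,r_{t,i-1}\ge c-\alpha/2$. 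In particular $r_{t,i-1}\ge c-\alpha/2 > r_{t,i}$, which is precisely the event $R_t < R_{t-1}$ in $\mathcal{A}$'s terms.

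With this in hand, I would condition on the probability-$2/3$ event from the faithful-leaderboard definition: on this event, every update of $\mathcal{A}$ satisfies $|r_{t,i} - R_S(f_{t,i})|\le \frac{1}{2\sqrt{n}}$, and a single event covers all rounds simultaneously, hence all $a_t$ produced by $\mathcal{B}$. Substituting the identity for $R_S(f_{t,i})$ together with the definition $a_t = 2(r_{t,i} - c + i\alpha/2)$ gives the clean algebraic identity
\[
a_t - \bar{g}_t \;=\; 2\bigl(r_{t,i} - R_S(f_{t,i})\bigr)\mcom
\]
and therefore $|a_t-\bar{g}_t|\le \frac{1}{\sqrt{n}}$ on the same event.

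The only real wrinkle is that this calculation yields $\frac{1}{\sqrt{n}}$ rather than the $\frac{1}{2\sqrt{n}}$ demanded by \definitionref{general-faithful}; as the paper itself stresses, these constants are arbitrary, and a clean fix is either to tighten the faithful-leaderboard definition to $\frac{1}{4\sqrt{n}}$ (which flows through the same computation unchanged) or to absorb the factor of two into the downstream lower bound. A small bookkeeping item is to fix $\mathcal{B}$'s behavior in the rare event that the inner loop exhausts without triggering the stopping condition; this can be folded into the $1/3$ failure probability, since by the counting argument used in \theoremref{reduction} it occurs with controlled probability whenever $\mathcal{A}$'s leaderboard error is small. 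Beyond this rescaling, no genuine obstacle remains: the lemma is essentially a syntactic consequence of the reduction's affine form combined with $\mathcal{A}$'s per-update faithfulness.
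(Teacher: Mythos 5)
The paper does not actually supply a proof of \lemmaref{faithful}; it is stated without argument, so there is nothing to compare against directly. Your reconstruction is the natural one and is essentially correct: the affine identity $a_t - \bar g_t = 2\bigl(r_{t,i} - R_S(f_{t,i})\bigr)$ is right, the observation that the stopping condition $r_{t,i} < c-\alpha/2 \le r_{t,i-1}$ forces an update round of $\cA$ is right (and similarly $r_{t,0}<c$ uses that $c$ is $\cA$'s last output), and conditioning once on the probability-$2/3$ faithfulness event of $\cA$ simultaneously controls every $a_t$.

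Your flagged factor-of-two is a real mismatch: the reduction doubles the error, so a leaderboard algorithm faithful at tolerance $1/(2\sqrt n)$ yields a general estimator faithful at tolerance $1/\sqrt n$, not $1/(2\sqrt n)$ as \definitionref{general-faithful} literally requires. Since the paper explicitly declares the constants arbitrary and the downstream use in \theoremref{faithful} tolerates any fixed $O(1/\sqrt n)$ threshold (one simply redefines the index set $I$ with a slightly larger margin), this is a bookkeeping inconsistency rather than a fatal one, but it is worth recording. The second wrinkle you raise --- what $\cB$ should output when the inner loop exhausts --- is also genuine, and your proposed fix is slightly off: you cannot fold it into the $1/3$ failure probability of the \emph{faithfulness} definition, because the counting argument you cite from \theoremref{reduction} relies on $\cA$ having small leaderboard error, which is a separate hypothesis not present in \lemmaref{faithful}. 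The cleanest reading is that \lemmaref{faithful} is only ever invoked in \corollaryref{faithful} in tandem with \theoremref{reduction}, where the accuracy hypothesis is available and guarantees the loop triggers on the queries produced by the majority attack (which all have $\E g_t = 1/2$). Stating the lemma with that implicit regime in mind, or simply defining $\cB$'s fallback output to be $c$ when the loop exhausts and restricting to queries with $\E g_t \le 1-2\alpha$, closes the gap.
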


We can therefore obtain a lower bound on faithful leaderboard algorithms by proving one against faithful general adaptive estimators.

\begin{theorem}
\theoremlabel{faithful}
No faithful general adaptive estimator is $(o(\sqrt{k/n}), 1/4)$-accurate on $n$
samples and $k\le n$ queries.
\end{theorem}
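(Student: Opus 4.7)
My plan is to use a fingerprinting-style adaptive attack to force any faithful general adaptive estimator into an $\Omega(\sqrt{k/n})$ error on at least one query. Faithfulness forces the estimator to reveal the empirical means of the submitted queries up to noise of size $1/(2\sqrt n)$, which is just enough information for an adaptive analyst to construct a single aggregate query with an $\Omega(\sqrt{k/n})$ empirical-vs-population gap; faithfulness then transfers that gap to the error of the estimator.

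Concretely, I would take $\cD$ uniform on $X=\{-1,1\}^{k-1}$ and, for $i=1,\dots,k-1$, submit the coordinate queries $g_i(x)=(1+x_i)/2$, which have $\E_\cD g_i=1/2$ and $\hat\mu_i=(1+\bar x_i)/2$ where $\bar x_i=\tfrac 1n\sum_j x_{j,i}$. By \definitionref{general-faithful}, with probability $\ge 2/3$ the answers satisfy $|a_i-\hat\mu_i|\le 1/(2\sqrt n)$ for all $i$ simultaneously, so the decoded vector $b_i:=2(a_i-\tfrac12)=\bar x_i+\eta_i$ has $\|\eta\|_\infty\le 1/\sqrt n$. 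The $k$-th query is the indicator
\[
g_k(x)=\mathbb I\!\Bigl[\sum_{i=1}^{k-1} b_i x_i>0\Bigr],
\]
whose population mean is exactly $1/2$ by coordinate-wise symmetry of $\cD$.

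The technical heart is a fingerprinting-lemma-style estimate showing that $R_S(g_k)\ge 1/2+\Omega(\sqrt{k/n})$ with probability at least $7/12$. Writing $T_j:=\sum_i b_i x_{j,i}$ and using the leave-one-out identity $\bar x_i=x_{j,i}/n+\tfrac 1n\sum_{j'\ne j}x_{j',i}$ gives
\[
T_j=\frac{k-1}{n}+\frac 1n\sum_i x_{j,i}\!\!\sum_{j'\ne j}x_{j',i}+\sum_i\eta_i x_{j,i}.
\]
Averaging over $j$ yields $\frac 1n\sum_j T_j=\|\bar x\|_2^2+\langle\eta,\bar x\rangle$. The leading term concentrates near $(k-1)/n$, while the adversarial correction is controlled by the H\"older pairing $|\langle\eta,\bar x\rangle|\le\|\eta\|_\infty\|\bar x\|_1\le(1/\sqrt n)\|\bar x\|_1$, which with the typical value $\|\bar x\|_1\approx(k-1)\sqrt{2/(\pi n)}$ leaves the signal positive at $(1-\sqrt{2/\pi})(k-1)/n\approx 0.2(k-1)/n$ with high probability. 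The second moment $\frac 1n\sum_j T_j^2=\langle b,\hat C b\rangle$ is $O(k/n)$ because the sample covariance $\hat C$ has operator norm $1+O(\sqrt{k/n})$ for $k\le n$. A Berry--Esseen bound on $T_j$ viewed as a weighted Rademacher sum in $x_j$ then gives $\Pr_j[T_j>0]\ge 1/2+\Omega(\sqrt{k/n})$, and bounded-differences concentration on $R_S(g_k)$ lifts this to the high-probability statement.

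Union-bounding with the faithfulness event, both hold simultaneously with probability $>1/4$. Faithfulness applied to $g_k$ then gives $a_k\ge R_S(g_k)-1/(2\sqrt n)=1/2+\Omega(\sqrt{k/n})$ for $k$ sufficiently large, contradicting $(o(\sqrt{k/n}),1/4)$-accuracy since $\E_\cD g_k=1/2$. The main obstacle is the fingerprinting estimate, specifically that the adversary's noise $\eta$ may depend on individual sample points in the computation of each $T_j$; my approach is to isolate the signal $(k-1)/n$, which arises purely from the $\eta$-independent diagonal $x_{j,i}^2/n$ contributions, from the fluctuations, so that the $\ell_\infty$ faithfulness budget can only degrade the attack by a constant factor rather than defeat it.
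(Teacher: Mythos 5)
Your attack is a genuinely different construction from the paper's, and the difference is precisely where your argument has a gap. The paper submits $k$ uniformly random classifiers against a distribution with uniformly random labels, uses the estimator's feedback only to \emph{select} the index set $I=\{i:\hat a_i<1/2-1/\sqrt n\}$, and aggregates by the unweighted pointwise majority $f=\mathrm{maj}_{i\in I}f_i$; the amplification of the empirical advantage is then a direct application of binomial anti-concentration (\claimref{anti}). You instead submit coordinate queries on the hypercube, decode the feedback into a real weight vector $b=\bar x+\eta$ with $\|\eta\|_\infty\le 1/\sqrt n$, and aggregate by the halfspace $\mathbb I[\langle b,x\rangle>0]$. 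The estimator thus controls a real perturbation per coordinate rather than a single selection bit per query, and this extra freedom is exactly what defeats your Berry--Esseen step.

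Concretely, you correctly flag that $\eta$ can depend on the same $x_j$ that appears in $T_j=\langle b,x_j\rangle$, but your remedy of isolating the diagonal contribution $(k-1)/n$ only pins down the first moment. The H\"older pairing and operator-norm bound give $\frac1n\sum_j T_j\ge\Omega(k/n)$ and $\frac1n\sum_j T_j^2=O(k/n)$, but two moments do not control the positive fraction: half the $T_j$ could equal $-\sqrt{k/n}$ and half could equal $\sqrt{k/n}+\Theta(k/n)$, matching both moments while leaving $\frac1n\#\{j:T_j>0\}$ at exactly $1/2$, in which case the attack produces no bias at $g_k$ at all. What you actually need is an anti-concentration statement for the deterministic empirical sequence $(T_j)_{j\le n}$, and the Berry--Esseen bound you invoke treats $T_j$ as a weighted Rademacher sum in $x_j$ with $x_j$-independent weights --- precisely the structure that the adversarially chosen $\eta(S)$ destroys. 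Closing this requires fingerprinting-lemma machinery that controls how much an arbitrary $\ell_\infty$-bounded post-processing of the sample can skew the conditional sign distribution of $T_j$; that is a substantial piece of work that your sketch does not supply, and which the paper sidesteps by limiting the estimator's influence to a per-query selection bit and then invoking the elementary binomial bound.
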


\begin{proof}
Set up the distribution~${\cal D}$ over $X\times Y$ with the label set
$Y=\{0,1\}$ such that the label $y$ is uniformly random conditional on any
instance~$x\in X.$
Fix a general adaptive estimator~${\cal B}$ that gets a sample~$S$ of
size $n$ drawn from~${\cal D}.$ We need to show that the estimator~${\cal B}$
cannot be $(o(\sqrt{k/n}), 1/4)$-accurate. 
To show this claim we will analyze the following procedure (majority
attack):
\begin{itemize}
\item Pick $k\le n$ random functions $f_1,\dots,f_k\colon X\to\{0,1\}$. 
\item Let $a_i = R_S(f_i)$ be the empirical risk of $f_i$ with respect to the
$0/1$-loss. Further, let $\hat a_i$ be the answer from the general adaptive
estimator on the query $g_i(x,y) = \mathbb{I}\Set{f_i(x) \ne y}.$ 
\item Consider the index set 
$I = \Set{i \colon \hat a_i < 1/2 - 1/\sqrt{n}}.$
\item Let $f=\mathrm{maj}_{i\in I} f_i$ be the pointwise majority function of
all functions in~$I$. That is $f(x)$ is the majority value among $f_i(x)$ with
$i\in I.$
\item Ask ${\cal B}$ to estimate the $0/1$-loss of~$f,$ i.e., submit the query
$g^*(x,y) = \mathbb{I}\Set{f(x)\ne y}.$
\end{itemize}
Note that $\E_{x,y\sim\cD} g^*(x,y) = R(f)$ and hence it remains to analyze the
difference between the risk and empirical risk of~$f.$
\begin{claim}
\claimlabel{majority-risk}
$R(f)=1/2.$
\end{claim}
\begin{proof}
This is true for any function $f\colon X\to Y$ 
given the way we chose the distribution over $X\times Y.$
\end{proof}
We claim that 
the empirical risk is bounded away from~$1/2$ by
$\Omega(\sqrt{k/n})$ with constant probability. 
A similar claim appeared in~\cite{BH15} without proof.
\begin{claim}
\claimlabel{majority-empirical}
Assume $k\le n.$ Then, with probability~$1/3,$
\[
R_S(f) \le 1/2 - \Omega\left(\sqrt{k/n}\right) - O\left(1/\sqrt{n}\right)\,.
\]
\end{claim}
\begin{proof}
Following \definitionref{general-faithful}, condition on the event that
for all~$t\in[k],$ we have $|a_t-\hat a_t|\le 1/2\sqrt{n}.$ 
By the definition, this even occurs with
probability~$2/3.$ 
Under this condition all $i\in I$ satisfy $a_i < 1/2 - 1/2\sqrt{n}.$
Furthermore, we claim that $|I|\ge \Omega(k)$ with probability $2/3.$ This
follows because $\Pr\{a_i < 1/2 - 1/\sqrt{n}\} = \Omega(1).$ In particular both
events occur with probability at least~$1/3.$
Let 
\[
\epsilon_i = \Pr_{(x,y)\in S}\Set{f_i(x) = y} - 1/2
\] 
be the advantage over
random of $g_i$ in correctly labeling an element of~$S.$ By definition of
$\epsilon_i,$ we must have that $\epsilon_i > 1/2\sqrt{n}$ for all $i\in I.$
We will argue that this advantage over random is amplified by the majority vote.

Let $Z_i$ be the indicator of the event that $f_i(x) = y$ for random $(x,y)\in
S.$ For ease of notation rearrange indices such that $I=\{1,2,\dots, m\},$
where $m=\Omega(k)$ as argued earlier.
We know that $Z_i$ is Bernoulli with parameter $1/2+\epsilon_i$ where by
construction $\epsilon_i\ge 1/2\sqrt{n}.$ 
Let $Z$ be the indicator of the event that $f(x) = y.$
Let $\epsilon=1/2\sqrt{n}$ and observe that $\epsilon\le1/\sqrt{m}$ since $k\le
n.$ Therefore,
\begin{align*}
\Pr\{Z=1\}
& \ge \frac12\Pr\Set{\sum_{i=1}^{m} Z_i > m/2}\\
& \ge  \Pr\Set{\mathrm{Binomial}(m, 1/2+\epsilon) > m/2}\\
& \ge \frac12 + \Omega\left(\sqrt{m}\epsilon\right) - O\left(1/\sqrt{m}\right) \tag{\claimref{anti}, using $\epsilon<1/\sqrt{m}$}\\
& = \frac12 + \Omega\left(\sqrt{k/n}\right) - O\left(1/\sqrt{n}\right)\,.
\end{align*}
The claim now follows, since~$R_S(f)=1-\Pr\{Z=1\}.$
\end{proof}
Taking \claimref{majority-risk} and \claimref{majority-empirical} together, we
have that $R(f)-R_S(f)\ge\Omega(k/n)-O(1/\sqrt{n}),$ with probability~$1/3.$
In particular, when $k=\omega(1),$ this shows that the estimator~${\cal B}$ is
not $(o(\sqrt{k/n}), 1/4)$-accurate. For $k=O(1),$ the same claim follows from
a standard variance calculation. 
\end{proof}

The previous theorem
implies that faithful leaderboard algorithms cannot have leaderboard error
better than~$n^{1/3}.$

\begin{corollary}
\corollarylabel{faithful}
No faithful leaderboard algorithm is $(\alpha,
\beta)$-accurate on $n$ samples and $k$ queries for any $\alpha=
k^{o(1)}/n^{1/3-c},$ $\beta=1-o(1)$ and constant~$c>0.$
\end{corollary}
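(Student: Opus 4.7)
The plan is to derive this corollary from \theoremref{faithful} (the lower bound against faithful general adaptive estimators) via the faithfulness-preserving reduction of \theoremref{reduction} combined with \lemmaref{faithful}. The argument is a clean chaining of the three prior results, so most of the work has already been done.

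Suppose for contradiction that some faithful leaderboard algorithm $\cal A$ were $(\alpha,\beta)$-accurate on $n$ samples and $k$ classifiers for $\alpha$ of the claimed form. The first step is to instantiate \theoremref{reduction} by choosing the number of leaderboard classifiers to be $k=\lceil 1/\alpha^2\rceil$, which matches the quantity consumed by the reduction. This produces a general adaptive estimator $\cal B$ that is $(2\alpha,\beta)$-accurate on $k' = \lfloor 1/(6\alpha)\rfloor$ queries and $n$ samples. By \lemmaref{faithful}, $\cal B$ inherits faithfulness from $\cal A$.

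The next step is to apply \theoremref{faithful} to $\cal B$: that theorem rules out faithful $(o(\sqrt{k'/n}),1/4)$-accurate estimators on $k'\le n$ queries. To realize a contradiction, one needs $2\alpha = o(\sqrt{k'/n}) = o((\alpha n)^{-1/2})$, which rearranges to $\alpha = o(n^{-1/3})$. Plugging in the corollary's form $\alpha = k^{o(1)}/n^{1/3-c}$ and observing that $k = \Theta(1/\alpha^2)$ is at most polynomial in $n$ (so $k^{o(1)} = n^{o(1)}$), the constant polynomial slack in the denominator drives $\alpha$ into the forbidden $o(n^{-1/3})$ regime for every fixed $c > 0$. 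This yields the desired contradiction.

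The main point requiring care is matching up the failure-probability parameters, since \theoremref{faithful} delivers a contradiction at $\beta=1/4$ while the corollary's statement allows $\beta$ as loose as $1-o(1)$. Since the reduction is deterministic given $\cal A$'s outputs, the failure probabilities chain through transparently, and the majority attack used in the proof of \theoremref{faithful} already produces a bad estimate with probability bounded below by a constant that is insensitive to $\beta$. This is thus a minor bookkeeping step rather than a real obstacle; the substantive content lies entirely in the reduction of \theoremref{reduction} and in the lower bound of \theoremref{faithful}, both already established.
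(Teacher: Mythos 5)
Your proof is correct and takes the same approach as the paper, which simply chains the reduction (\theoremref{reduction}) with faithfulness preservation (\lemmaref{faithful}) and the lower bound (\theoremref{faithful}); the paper's own proof is a one-line citation of these three results, and your write-up correctly fills in the parametrization (the $k' = \Theta(1/\alpha)$ query count, the rearrangement $\alpha = o(n^{-1/3})$, and the observation that $k = \Theta(1/\alpha^2) = \poly(n)$ so that $k^{o(1)} = n^{o(1)}$ is absorbed).
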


\begin{proof}
Combine our lower bound from~\theoremref{faithful} with the reduction
in~\theoremref{reduction}. By~\lemmaref{faithful}, faithfulness is preserved
and hence we get the stated lower bound.
\end{proof}

\section{Experiments with a shifted majority attack}
\sectionlabel{attack}
\sectionlabel{experiments}

The attack implicit in \corollaryref{faithful} corresponds to what we will call the
\emph{shifted majority attack}. To understand the idea, we briefly review the
\emph{Boosting attack} from~\cite{BH15}. In this procedure, the analyst
first asks $k$ random queries (thought of as vectors in $\{0,1\}^n$, one
binary label for each point in the holdout set), and then selects the ones
that have error ($0/1$-loss) less than~$1/2.$ Note that the expected loss is
$1/2.$ Among these selected queries, the analyst computes a coordinate-wise
majority vote, resulting in a final output vector $\hat y\in\{0,1\}^n.$ Blum
and Hardt observed that this output vector has expected error
$1/2-\Omega(\sqrt{k/n}),$ with respect to the true holdout
labels~$y\in\{0,1\}^n.$ Despite the fact that the vector setup is a slight
simplification of the actual formal framework we have, this idea carries over
to our setting by replacing random vectors with random functions. We will
refer to this procedure as \emph{majority attack}.

The majority attack has the property that when run against the Ladder
algorithm, the analyst quickly stops receiving new feedback. Newly chosen
random functions are increasingly unlikely to improve upon the error of previous
functions. Our procedure in~\figureref{reduction}, however, shows how to
offset the queries in such a way that the analyst continues to receive as much
feedback as possible from the algorithm. In theory, this requires knowledge
about the underlying distribution (which is fine for the purpose of proving
the theorem). In reality, we can imagine that there may be a subset of the
domain on which the classification problem is easy so that the analyst knows a
fraction of the labels with near certainty. The analyst can then use this
``easy set'' to offset the functions as required by the attack. This leads to
what we call the \emph{shifted majority attack}.

\paragraph{Setup.}
Rather than running the shifted majority attack, we will run the majority attack
for a varying number of queries~$k.$ The reason for this setup is that there is
no canonical parameter choice for the implementation of the Ladder algorithm, or
the Shaky Ladder. In particular, the number of queries that can be answered
using the shifting idea is closely related to the inverse of the step size
parameter. It is therefore more transparent to leave the number of queries as a
parameter that can be varied. \sectionref{implementation} contains a reference
implementation of the majority attack that we experiment with.

The primary purpose of our experiments is to understand in simulation the
effect of adding noise to the feedback of the leaderboard algorithm.

\paragraph{Observations.}
\figureref{varyqueries} shows that even a small amount of Gaussian
noise (e.g., standard deviation $\sigma=3/\sqrt{n}$) mostly neutralizes the
majority attack that is otherwise very effective against the standard
Ladder algorithm.  We note in passing that the \emph{parameter-free} Ladder
algorithm~\cite{BH15} only reveals more feedback than the Ladder algorithm. As
such it fares even more poorly than the Ladder algorithm under the shifted
majority attack. 

\figureref{varynoise} consolidates the observation by showing the effect of
varying noise levels. There appears to be a sweet spot at $3$ standard
deviations, where much of the harm of the shifted majority attack is
neutralized, while the amount of noise added is still small as a function
of~$n.$ In particular, in simulation it appears that less noise is necessary than
our theorem suggests.

\begin{figure}
\begin{center}
\includegraphics[width=0.48\textwidth]{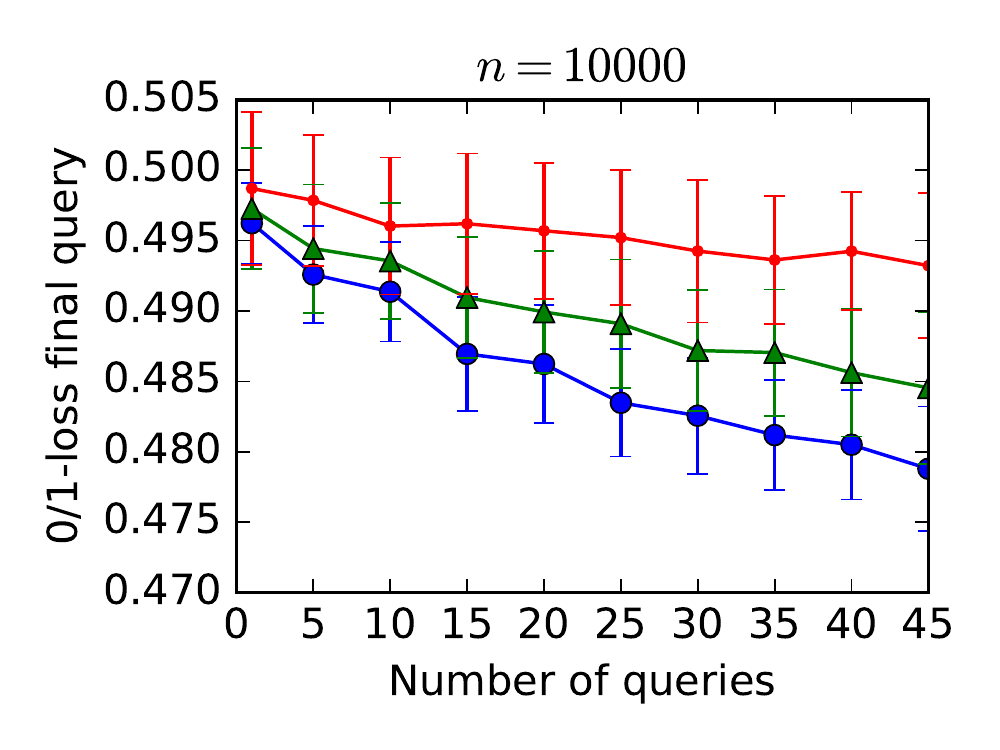}
\includegraphics[width=0.48\textwidth]{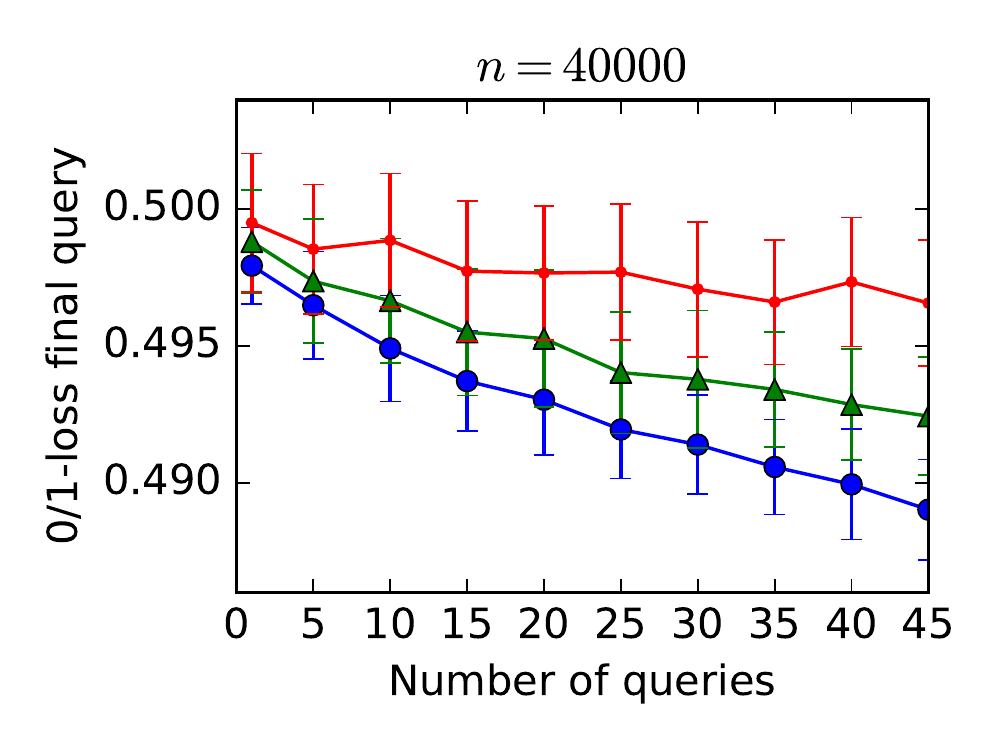}
\vspace{-5mm}
\includegraphics[width=0.48\textwidth]{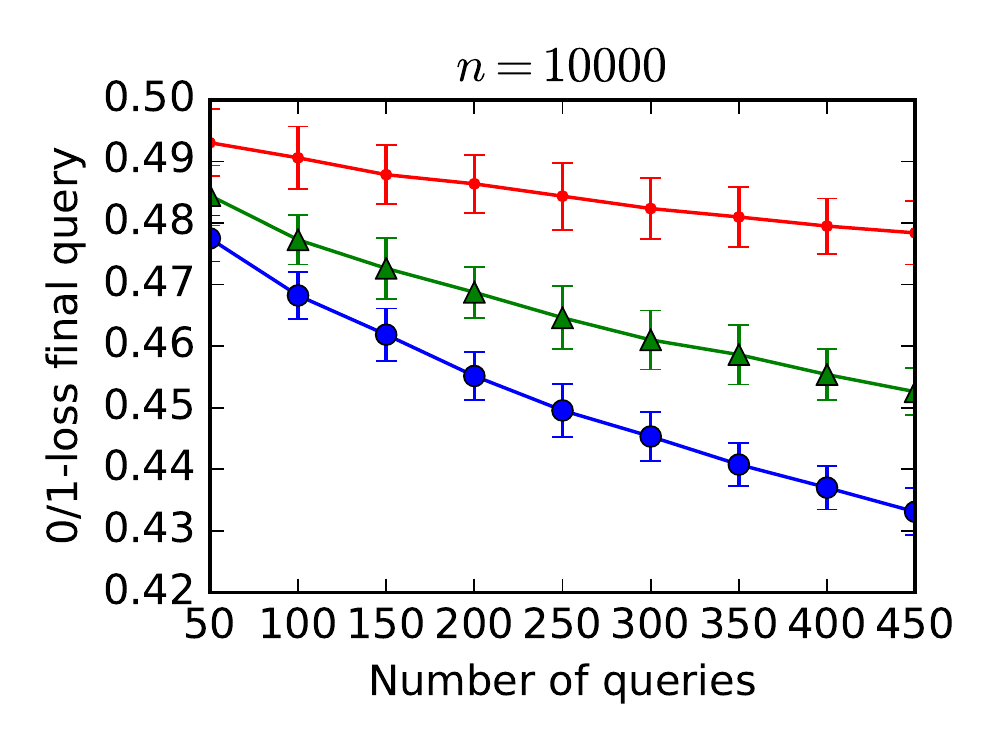}
\includegraphics[width=0.48\textwidth]{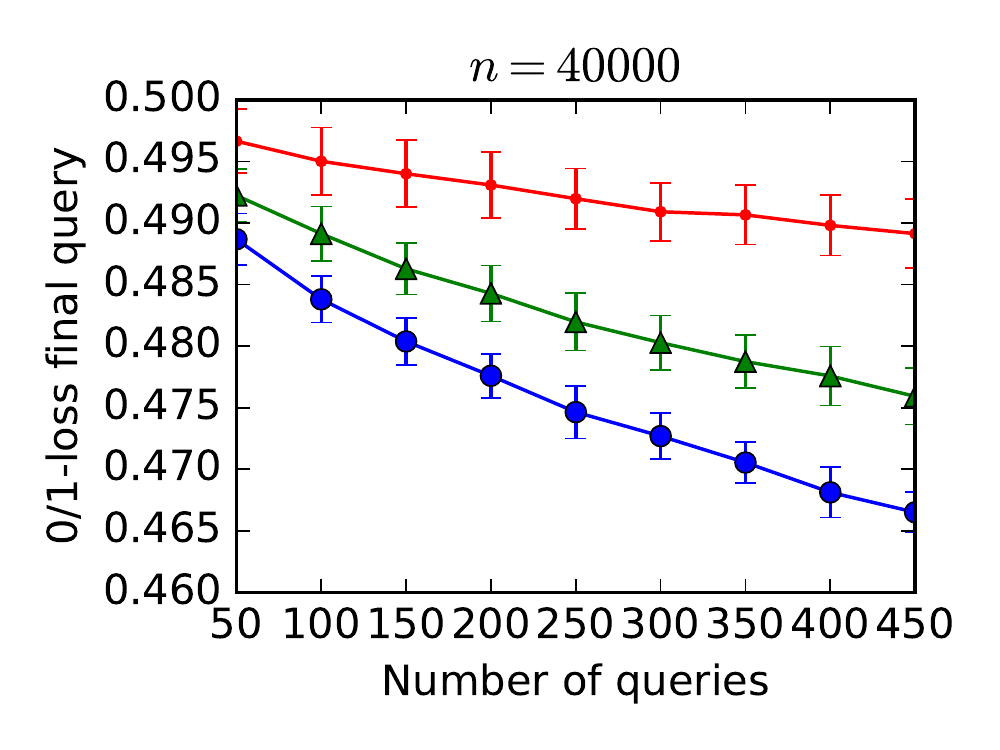}
\end{center}
\caption{Varying number of queries for different noise levels. Bottom line:
no noise. Middle line: $1/\sqrt{n}.$ Top line: $3/\sqrt{n}.$ Error bars
indicate standard deviation across $100$ independent repetitions.}
\figurelabel{varyqueries}
\end{figure}

\begin{figure}
\begin{center}
\includegraphics[width=0.48\textwidth]{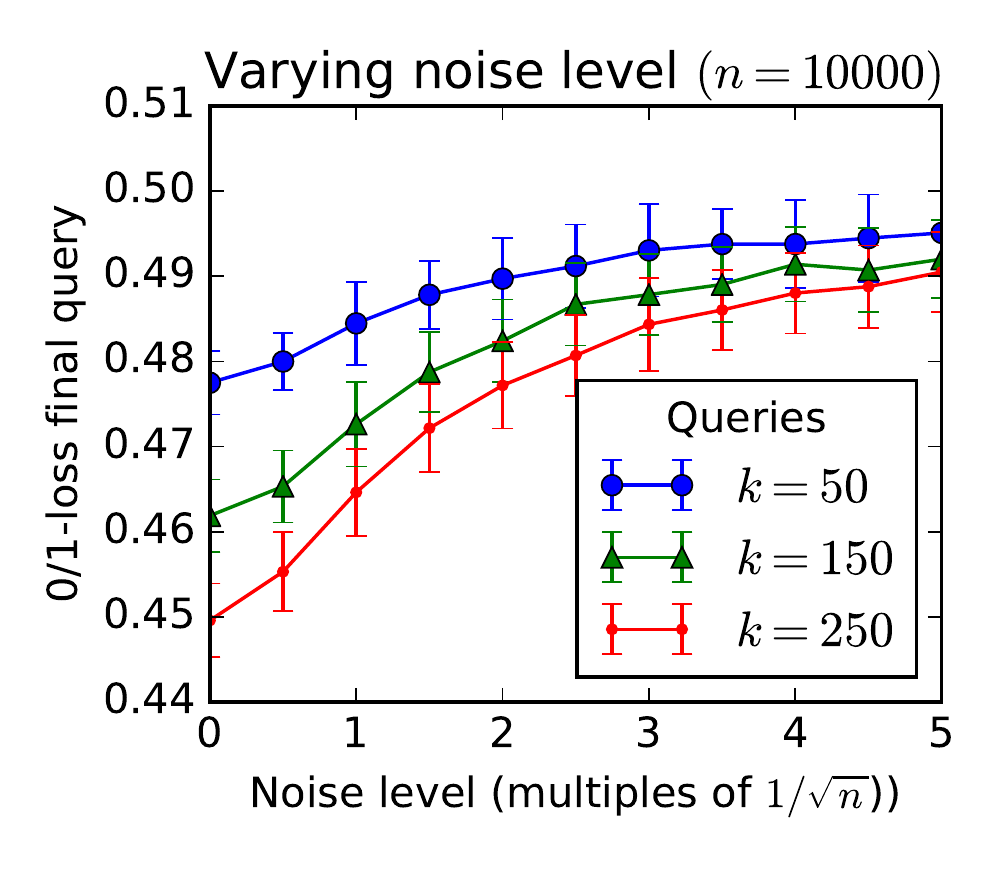}
\includegraphics[width=0.48\textwidth]{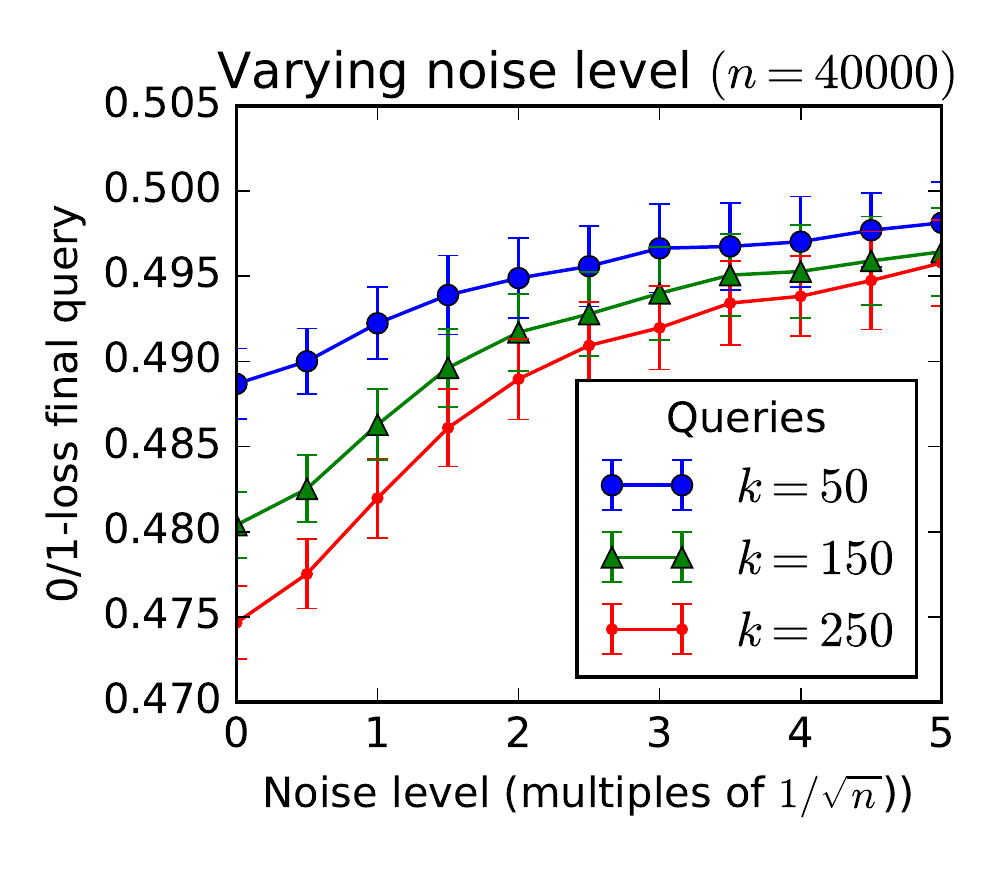}
\vspace{-5mm}
\end{center}
\caption{Varying noise level for different number of queries. Error bars
indicate standard deviation across $100$ independent repetitions.}
\figurelabel{varynoise}
\end{figure}

\section{Conclusion and open problems}

We saw a new algorithm with leaderboard error $O(n^{-0.4}).$ This upper bound
lies strictly between the two more natural bounds of $O(n^{-1/3})$ and
$O(n^{-1/2}).$ If experience from online and Bandit learning is any guide, the
new upper bound might suggest that there is hope of attaining the tight
$O(n^{-1/2})$ error rate. This possibility is further supported by the fact that
the majority attack we saw in~\sectionref{experiments} is quite sensitive to
noise on the order of~$O(n^{-1/2}).$ This leads us to conjecture that
$O(n^{-1/2})$ might in fact be the right answer. However, in light of our
connection between the general adaptive estimation setting and leaderboard
error, such a conjecture can now be refuted by stronger lower bounds for the
general adaptive estimation setting. It is unclear if more sophisticated lower
bounding techniques based on Fingerprinting codes as used 
in~\cite{HardtU14,SteinkeU14} could be used to obtain stronger lower bounds in
the small number of query regime ($k\ll n$).

\section*{Acknowledgments}

Many thanks to Avrim Blum, Yair Carmon, Roy Frostig, and Tomer Koren for insightful
observations and suggestions at various stages of this work.

\pagebreak
\bibliographystyle{moritz}
\bibliography{moritz}
\vfill

\pagebreak
\appendix
\section{Anti-concentration inequality for the Binomial distribution}

\begin{claim}
\claimlabel{anti}
Let $0 <\epsilon \le 1/\sqrt{m}.$ Then,
\[
\Pr\Set{\mathrm{Binomial}(m, 1/2+\epsilon) > m/2}
\ge \frac12 + \Omega\left(\sqrt{m}\epsilon\right) - O\left(1/\sqrt{m}\right)\,.
\]
\end{claim}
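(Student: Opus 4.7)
The plan is to reduce to a Gaussian approximation via a Berry--Esseen bound, use the hypothesis $\epsilon \le 1/\sqrt m$ to keep the relevant argument bounded, and then read off the linear term from $\Phi$ near the origin.

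First I would set $X\sim\mathrm{Binomial}(m,1/2+\epsilon)$, so that $\E X = m/2 + m\epsilon$ and $\Var(X) = m(1/4-\epsilon^2) = \Theta(m)$ (using $\epsilon \le 1/\sqrt m \le 1/2$ for large enough $m$; tiny $m$ is handled by absorbing constants). Standardizing, I would write
\[
\Pr\set{X > m/2}
= \Pr\Set{\frac{X-\E X}{\sqrt{\Var X}} > -t_m}
\qquad\text{where}\qquad
t_m \defeq \frac{m\epsilon}{\sqrt{m(1/4-\epsilon^2)}} = 2\sqrt m\,\epsilon\cdot\bigl(1+O(\epsilon^2)\bigr).
\]
The hypothesis $\epsilon\le 1/\sqrt m$ gives $t_m \le 2+o(1)$, so $t_m$ is a bounded quantity of order $\sqrt m\,\epsilon$.

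Next I would apply the Berry--Esseen theorem to the standardized sum of $m$ i.i.d.\ Bernoulli$(1/2+\epsilon)$ summands. The third absolute moment of a centered Bernoulli is $O(1)$, and the variance is $\Theta(1)$, so the classical bound yields
\[
\Abs{\,\Pr\set{X>m/2} - \Phi(t_m)\,} \le \frac{C}{\sqrt m}
\]
for an absolute constant $C$. It remains to lower bound $\Phi(t_m)$. Since the standard normal density $\phi$ is decreasing on $[0,\infty)$, for any $t\ge 0$ we have $\Phi(t) - \tfrac12 = \int_0^t \phi(s)\,ds \ge t\,\phi(t)$, and for $t\in[0, 2+o(1)]$ this gives $\Phi(t)-1/2 \ge \phi(3)\cdot t = \Omega(t)$. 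Plugging in $t=t_m = \Theta(\sqrt m\,\epsilon)$ yields
\[
\Pr\set{X>m/2} \ge \Phi(t_m) - \frac{C}{\sqrt m}
\ge \tfrac12 + \Omega(\sqrt m\,\epsilon) - O(1/\sqrt m),
\]
which is the desired inequality.

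The only subtle step is the Berry--Esseen application; everything else is elementary. I do not expect any serious obstacle, but the one thing worth double--checking is that the linear lower bound $\Phi(t)-1/2 \ge \Omega(t)$ is only used in the regime $t = O(1)$, which is exactly where the assumption $\epsilon \le 1/\sqrt m$ places us. For $\epsilon$ very close to $1/\sqrt m$, the constants inside $\Omega(\cdot)$ and $O(\cdot)$ should be chosen so that the statement remains nontrivial, but since both terms are of the same order $1/\sqrt m$ the inequality is simply a constant--factor statement and does not require a delicate balance.
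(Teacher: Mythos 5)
Your proof is correct and follows essentially the same path as the paper's: apply Berry--Esseen to reduce to a Gaussian approximation with $O(1/\sqrt m)$ error, then use a linear lower bound on the Gaussian CDF near its median (the paper bounds the density of $\mathrm{N}(0,pq)$ on the interval $[-\epsilon\sqrt m,0]$, which is the same estimate as your $\Phi(t)-\tfrac12\ge t\,\phi(t)$ with $t=O(1)$). The only cosmetic difference is that you standardize explicitly to a $\Phi$ expression while the paper shifts and rescales the normal directly.
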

\begin{proof}
Put $p=1/2+\epsilon$ and $q=1-p.$
On the one hand, for the given upper bound on~$\epsilon,$ the Berry-Esseen
theorem implies the normal approximation
\[
\Pr\Set{\mathrm{Binomial}(m, 1/2+\epsilon) > m/2}
\ge
\Pr\Set{\mathrm{N}(mp, mpq) > mp - \epsilon m}- O\left(1/\sqrt{m}\right)\,.
\]
On the other hand,
\begin{align*}
\Pr\Set{\mathrm{N}(mp, mpq) > mp - \epsilon m}
= \Pr\Set{\mathrm{N}(0, pq) > - \epsilon \sqrt{m}}
\ge \frac12 + \Omega\left(\epsilon\sqrt{m}\right)\,.
\end{align*}
In the last step, we used the our upper bound on $\epsilon,$ which ensures
that $\epsilon\sqrt{m} \le 1.$ Noting that $pq\ge\Omega(1),$ the last step now
follows from the fact that the density of $\mathrm{N}(0, pq)$ is lower bounded
by a constant in the interval $[-\epsilon\sqrt{m}, 0].$

Putting the two observations together we get
\[
\Pr\Set{\mathrm{Binomial}(m, 1/2+\epsilon) > m/2}
\ge \frac12
+ \Omega\left(\epsilon\sqrt{m}\right)
- O\left(1/\sqrt{m}\right)
\,.
\]
\end{proof}

\section{Reference implementation for majority attack}
\sectionlabel{implementation}

For definedness, we include a reference implementation of the majority attack
used in our experiments.

\begin{lstlisting}[language=Python,basicstyle=\ttfamily]
import numpy as np

def majority_attack(n, k, sigma=None):
    """Run majority attack and report resulting bias."""
    hidden_vector = 2.0 * np.random.randint(0, 2, n) - 1.0
    queries = 2.0 * np.random.randint(0, 2, (k, n)) - 1.0
    answers = queries.dot(hidden_vector)/n
    if sigma:
        answers += np.random.normal(0, sigma, k)
    positives = queries[answers > 0., :]
    negatives = queries[answers <= 0., :]
    weighted = np.vstack([positives, -1.0*negatives])
    weights = weighted.T.dot(np.ones(k))
    final = np.ones(n)
    final[weights < 0.] = -1.0
    return np.mean(final != hidden_vector)
\end{lstlisting}

\end{document}